\newcommand{\rank}{\mathrm{rank}}
\newcommand{\Stiefel}[2]{{\mathrm{St}({#1},{#2})}}
\newcommand{\Grass}[2]{{\mathrm{Gr}({#1},{#2})}}
\newcommand{\OG}[1]{{\mathcal{O}({#1})}}
\newcommand{\argmin}{\operatornamewithlimits{arg\,min}}
\newcommand{\grad}{\mathrm{grad}}
\newcommand{\mat}[1]{{\bf #1}}
\newcommand{\Grad}{\mathrm{Grad}}
\newcommand{\Exp}{{\mathrm{Exp}}}
\newcommand{\Log}{{\mathrm{Log}}}
\renewcommand{\cos}{{\mathrm{cos}}}
\renewcommand{\sin}{{\mathrm{sin}}}
\newcommand{\noteBM}[1]{#1}
\newcommand{\changeBMM}[1]{#1}
\begin{document}


\title{A Riemannian gossip approach to\\ subspace learning on Grassmann manifold}



\author{Bamdev Mishra         \and
        Hiroyuki Kasai \and
        Pratik Jawanpuria \and
        Atul Saroop
}


\institute{Bamdev Mishra \at
              Amazon.com \\
              \email{bamdevm@amazon.com}           
           \and
           Hiroyuki Kasai \at
           The University of Electro-Communications\\
           \email{kasai@is.uec.ac.jp}
           \and
           Pratik Jawanpuria \at
           Amazon.com \\
           \email{jawanpur@amazon.com}
           \and
           Atul Saroop \at
           Amazon.com \\
           \email{asaroop@amazon.com}
}

\date{Received: date / Accepted: date}

\maketitle

\begin{abstract}
%
%

In this paper, we focus on subspace learning problems on the Grassmann manifold. Interesting applications in this setting include low-rank matrix completion and low-dimensional multivariate regression, among others. Motivated by privacy concerns, we aim to solve such problems in a decentralized setting where multiple agents have access to (and solve) only a part of the whole optimization problem. The  agents communicate with each other to arrive at a consensus, i.e., agree on a common quantity, via the gossip protocol.

We propose a novel cost function for subspace learning on the Grassmann manifold, which is a weighted sum of several sub-problems (each solved by an agent) and the communication cost among the agents. The cost function has a finite sum structure. In the proposed modeling approach, different agents learn individual local subspace but they achieve asymptotic consensus on the global learned subspace. The approach is scalable and parallelizable. Numerical experiments show the efficacy of the proposed decentralized algorithms on various matrix completion and multivariate regression benchmarks.

\end{abstract}




\section{Introduction}
Learning a low-dimensional representation of vast amounts of data is a fundamental problem in machine learning. It is motivated by considerations of low memory footprint or low computational complexity, model compression, better generalization performance, robustness to noise, among others. The applicability of low-dimensional modeling is ubiquitous, including images in computer vision, text documents in natural language processing, genomics data in bioinformatics, and customers' record or purchase history in recommender systems. 

Principal component analysis (PCA) is one of the most well known algorithms employed for low-dimensional representation in data analysis \citep{Bishop06}. PCA is employed to learn a low-dimensional subspace that captures the most variability in the given data. Collaborative filtering based applications, such as movie or product recommendation, desire learning a latent low-dimensional subspace that captures users' preferences~\citep{rennie05a,zhou08a,abernethy09a}. The underlying assumption here is that similar users have similar preferences.  A common approach to model this problem is via low-rank matrix completion: recovering low-rank matrices when most entries are unknown \citep{candes09b,cai10a,wen12a}. Motivated by similar requirements of learning a low-dimensional subspace, low-rank matrix completion algorithms are also employed in other applications such as system identification \citep{markovsky13a}, subspace identification \citep{balzano10a}, sensor networks \citep{keshavan09a}, and gene expression prediction \citep{kapur16a}, to name a few. 

In several multivariate regression problems, we need to learn the model parameters for several {\it related} regression tasks (problems), but the amount of labeled data available for each task is low. In such data scarce regime, learning each regression problem (task) only with its own labeled data may not give good enough generalization performance~\citep{baxter97,baxter00,Jalali10,Alvarez12,zhang17a}. The paradigm of multitask learning~\citep{Caruana97} advocates learning these related tasks {\it jointly}, i.e., each tasks not only learns from its own labeled data but also from the labeled data of other tasks. Multitask learning is helpful when the tasks are {\it related}, e.g., the model parameters of all the tasks have some common characteristics that may be exploited during the learning phase. 
Existing multitask literature have explored various ways of learning the tasks jointly~\citep{evgeniou04a,Jacob08,zhang10a,Zhong12,mjaw12,Kumar12b,zhang15a}. Enforcing the model parameters of all the tasks to share a common low-dimensional latent feature space is a common approach in multitask (feature) learning~\citep{ando05a,amit07a,argyriou08a}. 


A low-dimensional subspace can be viewed as an instance of the \emph{Grassmann} manifold $\Grass{r}{m}$, which is the set of $r$-dimensional subspaces in $\mathbb{R}^m$. A number of Grassmann algorithms exploiting the geometry of the search space exist for subspace learning, in both batch \citep{absil08a} and online variants \citep{bonnabel13a,zhang16a,sato17a}. Several works  \citep{balzano10a,dai11a,he12a,boumal15a} discuss a subspace learning approach based on the Grassmann geometry from incomplete data. The works \citep{meyer09a,meyer11c} exploit the Grassmann geometry in distance learning problems through low-rank subspace learning. More recently, the works \citep{harandi16a,harandi17a,harandi18a} show the benefit of the Grassmann geometry in low-dimensional dictionary and metric learning problems.

In this paper, we are interested in a \emph{decentralized} learning setting on the Grassmann manifold, which is less explored for the considered class of problems. To this end, we assume that the given  data is {\it distributed} across several agents, e.g., different computer systems. The agents can learn a low-dimensional subspace only from the data that resides locally within them, and cannot access data residing within other agents. This scenario is common in situations where there are privacy concerns of sharing sensitive data. The works \citep{ling12a,lin15a} discuss decentralized algorithms for the problem of low-rank matrix completion. The agents can communicate with each other to develop consensus over a required objective, which in our case, is the low-dimensional subspace. The communication between agents causes additional computational overheads, and hence should ideally be as little as possible. This, in addition to privacy concerns, motivate us to employ the so-called \emph{gossip} protocol in our setting \citep{boyd06a,shah09a,colin16a}. In the gossip framework, an agent communicates with only one other agent at a time \citep{boyd06a}. 

Recently, \citet[Section~4.4]{bonnabel13a} discusses a non-linear gossip algorithm for estimating covariance matrix $\mat{W}$ on a sensor network of multiple agents. Each agent is initialized with a local covariance matrix estimate, and the aim there is to reach a common (average) covariance matrix estimate via communication among the agents. If $\mat{W}_i$ is the estimate of the covariance matrix possessed by agent $i$, \citet{bonnabel13a} proposes to minimize the cost function $$\sum_{i=1}^{m-1} d^2(\mat{W}_i,\mat{W}_{i+1}),$$ to arrive at consensus, where the total number of agents is $m$ and $d$ is a distance function between covariance matrix estimates. At each time slot, a randomly chosen agent $i(<m)$ communicates with its neighbor agent $i+1$ and both update their covariance matrix estimates. \citet{bonnabel13a} shows that under mild assumptions, the agents converge to a common covariance matrix estimate, i.e., the agents achieve consensus. It should be noted that consensus learning on manifolds has been in general a topic of much research, e.g., the works \citep{sarlette09a,tron11a,tron13a} study the dynamics of agents which share their relative states over a more complex communication graph (than the one in \cite[Section~4.4]{bonnabel13a}) . The aim in \citep{sarlette09a,tron11a,tron13a,bonnabel13a} is to make the agents converge to a single point. In this paper, however, we dwell on consensus learning of agents along with optimizing the sub-problems handled by the agents. For example, at every time instance a randomly chosen agent locally updates its local subspace (e.g., with a gradient update) and simultaneously communicate with its neighbor to build a consensus on the global subspace. This is a typical set up encountered in machine learning based applications. The paper does not aim at a comprehensive treatment of consensus algorithms on manifolds, but rather focuses on the role of the Grassmann geometry in coming out with a simple cost problem formulation for decentralized subspace learning problems. 

We propose a novel optimization formulation on the Grassmann manifold that combines together a weighted sum of tasks (accomplished by agents individually) and consensus terms (that couples subspace information transfer among agents). The weighted formulation allows an {\it implicit} averaging of agents at every time slot. The formulation allows to readily propose a stochastic gradient algorithm on the Grassmann manifold and further allows a {\it parallel} implementation (via a modified sampling strategy). For dealing with ill-conditioned data, we also propose a \emph{preconditioned variant}, which is computationally efficient to implement. We apply the proposed approach on two popular subspace learning problems: low-rank matrix completion \citep{cai10a,keshavan10a,balzano10a,boumal15a,boumal11a,dai11a} and multitask feature learning \citep{ando05a,argyriou08a,Zhang08,zhang17a}. Empirically, the proposed algorithms compete effectively with state-of-the-art on various benchmarks. 

The organization of the paper is as follows. Section \ref{sec:grassmann} presents a discussion on the Grassmann manifold. Both low-rank matrix completion and multitask feature learning problems are motivated in Section \ref{sec:motivation} as finite sum problems on the Grassmann manifold. In Section \ref{sec:decentralized_problem_setup}, we discuss the decentralized learning  setup and propose a novel problem formulation. In Section \ref{sec:gossip}, we discuss the proposed stochastic gradient based gossip algorithm along with preconditioned and parallel variants. Experimental results are discussed in Section \ref{sec:comparisons}. The present paper extends the unpublished technical report \citep{mishra16c}. The Matlab codes for the proposed algorithms are available at \url{https://www.bamdevmishra.com/gossip}.

\section{Grassmann manifold}\label{sec:grassmann}
The Grassmann manifold $\Grass{r}{m}$ is the set of $r$-dimensional subspaces in $\mathbb{R}^m$. In matrix representation, an element of $\Grass{r}{m}$ is represented by the \emph{column space} of a full rank matrix of size $m\times r$. Equivalently, if $\mat{U}$ is a full rank matrix of size $m \times r$, an element of $\Grass{r}{m}$ is represented as 
\begin{equation}\label{eq:col_space}
\mathcal{U} \coloneqq \text{the column space of }\mat{U}.
\end{equation}
Without loss of generality, we impose \emph{orthogonality} on $\mat U$, i.e., $\mat{U}^\top{\mat U} = \mat{I}$. This characterizes the columns space in (\ref{eq:col_space}) and allows to represent $\mathcal{U}$ as follows:
\begin{equation}\label{eq:rotational_invariance}
\mathcal{U} \coloneqq \{ \mat{\mat U O}:  \mat{O} \in \OG{r}\},
\end{equation}
where $\OG{r}$ denotes the orthogonal group, i.e., the set of $r\times r$ orthogonal matrices. An implication of (\ref{eq:rotational_invariance}) is that each element of $\Grass{r}{m}$ is an \emph{equivalence set}. This allows the Grassmann manifold to be treated as a \emph{quotient} space of the larger Stiefel manifold $\Stiefel{r}{m}$, which is the set of matrices of size $m\times r$ with \emph{orthonormal} columns. Specifically, the Grassmann manifold has the \emph{quotient manifold} structure
\begin{equation}\label{eq:quotient_structure}
\Grass{r}{m} \coloneqq \Stiefel{r}{m}/\OG{r}.
\end{equation}

A popular approach to optimization on a quotient manifold is to recast it to into a Riemannian optimization framework \citep{edelman98a,absil08a}. In this setup, while optimization is conceptually on the Grassmann manifold $\Grass{r}{m} $, numerically, it allows to implement operations with concrete matrices, i.e., with elements of $\Stiefel{r}{m}$. Geometric objects on the quotient manifold can be defined by means of matrix representatives. Below, we show the development of various geometric objects that are are required to optimize a smooth cost function on the quotient manifold with a first-order algorithm (including the stochastic gradient algorithm). Most of these notions follow directly from \citep{absil08a}.

A fundamental requirement is the characterization of the linearization of the Grassmann manifold, which is the called its \emph{tangent space}. Since the Grassmann manifold is the quotient space of the Stiefel manifold, shown in (\ref{eq:quotient_structure}), its tangent space has matrix representation in terms of the tangent space of the larger Stiefel manifold $ \Stiefel{r}{m}$. Endowing the Grassmann manifold with a Riemannian submersion structure \citep{absil08a}, the tangent space of $ \Stiefel{r}{m}$ at $\mat U$ has the characterization
\begin{equation}\label{eq:tangent_space_Stiefel}
T_{\mat{U}} \Stiefel{r}{m} := \{ \mat{Z}_{\mat{U}} \in \mathbb{R}^{m \times r}: \mat{U}^\top {\mat Z}_{\mat{U}} + {\mat Z}_{\mat{U}}^\top \mat{U} = {\mat 0} \}.
\end{equation}
The tangent space of $\Grass{r}{m}$ at an element $\mathcal{U}$ identifies with a \emph{subspace} of $T_{\mat{U}} \Stiefel{r}{m}$ (\ref{eq:tangent_space_Stiefel}), and specifically, which has the matrix characterization, i.e.,
\begin{equation}\label{eq:tangent_space_Grassmann}
\text{matrix characterization of }T_{\mathcal{U}} \Grass{r}{m} := \{ \xi_{\mat{U}} \in \mathbb{R}^{m \times r}: \mat{U}^\top \xi_{\mat{U}} = 0 \},
\end{equation}
where $\mat{U}$ is the matrix characterization of $\mathcal{U}$. In (\ref{eq:tangent_space_Grassmann}), the vector $\xi_{\mat{U}}$ is the matrix characterization of the abstract tangent vector $\xi_{\mathcal{U}} \in T_{\mathcal{U}} \Grass{r}{m}$ at $\mathcal{U} \in \Grass{r}{m}$.

A second requirement is the computation of the Riemannian gradient of a cost function, say $f : \Grass{r}{m} \rightarrow \mathbb{R}$. Again exploiting the quotient structure of the Grassmann manifold, the Riemannian gradient ${\grad}_{\mathcal{U}} f$ of $f$ at $\mathcal{U} \in \Grass{r}{m}$ admits the matrix expression
\begin{eqnarray*}
	 {\grad}_{\mathcal{U}} f = \Grad_{\mat U} f - \mat{U} (\mat{U}^\top \Grad_{\mat U} f),
\end{eqnarray*}
where $\Grad_{\mat U} f$ is the (Euclidean) gradient of $f$ in the matrix space $\mathbb{R}^{m \times r}$ at $\mat{U}$.

A third requirement is the notion of a straight line along a tangential direction on the Grassmann manifold. This quantity is captured with the \emph{exponential mapping} operation on the Grassmann manifold. Given a tangential direction $\xi_{\mathcal{U}} \in T_{\mathcal{U}} \Grass{r}{m}$ that has the matrix expression $\xi_{\mat U}$ belonging to the subspace (\ref{eq:tangent_space_Grassmann}), the exponential mapping along $\xi_{\mat U}$ has the expression \cite[Section 5.4]{absil08a}
\begin{equation}
\label{Eq:exponential_map}
\Exp_{\mathcal{U}}(\xi_{\mathcal{U}}) :=  \mat{U} \mat{V} \cos(\mat \Sigma)\mat{V}^\top +   \mat{W} \sin(\mat \Sigma) \mat{V}^\top,
\end{equation} 
where $\mat{W \Sigma V}^\top$ is the rank-$r$ singular value decomposition of $\xi_{\mat{U}}$. The $\cos(\cdot)$ and $\sin(\cdot)$ operations are on the diagonal entries.

Finally, a fourth requirement is the notion of the \emph{logarithm} map of an element $\widetilde{\mathcal{U}}$ at $\mathcal{U}$ on the Grassmann manifold. The logarithm map operation maps $\widetilde{\mathcal{U}}$ onto a tangent vector at $\mathcal{U}$, i.e., if $\widetilde{\mathcal{U}}$ and $\mathcal{U}$ have matrix operations $\widetilde{\mat{U}}$ and $\mat{U}$, respectively, then the logarithm map finds a vector in (\ref{eq:tangent_space_Grassmann}) at $\mat{U}$. The closed-form expression of the logarithm map $ {\rm Log}_{\scriptsize \mathcal{U}} (\widetilde{\mathcal{U}})$, i.e.,
\begin{equation}\label{Eq:logarithm_map}
\begin{array}{lll}
 {\rm Log}_{\scriptsize \mathcal{U}(0)} (\mathcal{U}(t))  \ = \ \mat{P} \arctan(\mat S) \mat{Q}^\top,
\end{array}
\end{equation}	
where $\mat{PS} \mat{Q}^\top$ is the rank-$r$ singular value decomposition of $(\widetilde{\mat{U}} - \mat{U} \mat{U}^\top  \widetilde{\mat{U}})\allowbreak(\mat{U}^\top \widetilde{\mat{U}})^{-1}$.

\section{Motivation} \label{sec:motivation}
We look at a decentralized learning of the subspace learning problem of the form
\begin{equation}\label{eq:general_formulation}
\begin{array}{lll}
\displaystyle \min\limits_{\mathcal{U} \in \Grass{r}{m}}  & \displaystyle \sum\limits_{i = 1}^{N} f_i(\mathcal{U}),
\end{array}
\end{equation}
where $\Grass{r}{m}$ is the Grassmann manifold. We assume that the functions $f_i: \mathbb{R}^{m\times r} \rightarrow \mathbb{R}$ for all $i=\{1,\ldots,N \}$ are smooth. In this section, we formulate two popular class of problems as subspace learning problems of the form (\ref{eq:general_formulation}) on the Grassmann manifold. The decentralization learning setting for (\ref{eq:general_formulation}) is considered in Section \ref{sec:decentralized_problem_setup}.

\subsection{Low-rank matrix completion as subspace learning}\label{sec:matrix_completion}
The problem of low-rank matrix completion amounts to completing a matrix from a small number of entries by assuming a low-rank model for the matrix. The rank constrained matrix completion problem can be formulated as
\begin{equation}\label{eq:formulation_mc}
\begin{array}{llll}
	\min\limits_{\mat{Y}\in\mathbb{R}^{m \times n}}
		 \ \ \displaystyle\frac{1}{2}	\|\mathcal{P}_{\Omega}(\mat{Y}) - \mathcal{P}_{\Omega}(\mat{Y}^\star)\|_F^2  +\ \   \lambda \| \mat{Y}  - \mathcal{P}_{\Omega}(\mat Y)\|_F^2 \\
	 \text{subject to}  \ \  \rank(\mat{Y})=r,
\end{array}
\end{equation}
where $\| \cdot\|_F$ is the Frobenius norm, $\lambda$ is the regularization parameter \citep{boumal15a,boumal11a}, and ${\mat{Y }^ \star}\in\mathbb{R}^{n\times m}$ is a matrix whose entries are known for indices if they belong to the subset $(i,j)\in\Omega$ and $\Omega$ is a subset of the complete set of indices $\{(i,j):i\in\{1,...,m\}\text{ and }j\in\{1,...,n\}\}$. The operator $[\mathcal{P}_{\Omega}(\mat{Y})]_{ij}=\mat{Y}_{ij}$ if $(i,j) \in \Omega$ and $[\mathcal{P}_{\Omega}(\mat{Y})]_{ij}=0$ otherwise is called the orthogonal sampling operator and is a mathematically convenient way to represent the subset of known entries. The rank constraint parameter $r$ is usually set to a low value, e.g., $r \ll (m, n)$. The particular regularization term $\| \mat{Y}  - \mathcal{P}_{\Omega}(\mat Y)\|_F^2$ in (\ref{eq:formulation_mc}) is popularly motivated in \citep{dai11a,boumal15a,boumal11a}, and it specifically penalizes the large predictions. An alternative to the regularization term in (\ref{eq:formulation_mc}) is $\| \mat{Y}  \|_F^2$.

A way to handle the rank constraint in (\ref{eq:formulation_mc}) is by using the parameterization $\mat{Y} = \mat{U} \mat{W}^\top$, where $\mat{U} \in \Stiefel{r}{m}$ and $\mat{W} \in \mathbb{R}^{n \times r}$ \citep{boumal15a,boumal11a,mishra14a}. The problem (\ref{eq:formulation_mc}) reads 
\begin{equation}\label{eq:factorized_formulation_mc}
\begin{array}{ll}
\min\limits_{\mat{U}\in \Stiefel{r}{m}}  \min\limits_{\mat{W}\in\mathbb{R}^{n \times r }} \displaystyle \frac{1}{2} \|\mathcal{P}_{\Omega}(\mat{UW}^\top) - \mathcal{P}_{\Omega}(\mat{Y^\star})\|_F^2 \  \ + \ \lambda \| \mat{UW}^\top  - \mathcal{P}_{\Omega}(\mat{UW}^\top)\|_F^2.
\end{array}
\end{equation}
The \emph{inner} least-squares problem in (\ref{eq:factorized_formulation_mc}) admits a closed-form solution. Consequently, it is straightforward to verify that the outer problem in $\mat U$ only depends on the \emph{column space} of $\mat U$, and therefore, is on the Grassmann manifold $\Grass{r}{m}$ and not on $\Stiefel{r}{m}$ \citep{dai12a,boumal15a,boumal11a}. Solving the inner problem in closed form, the problem at hand is 
\begin{equation}\label{eq:grassmann_formulation_mc}
\begin{array}{lll}
\min\limits_{\mathcal{U}\in \Grass{r}{m}} \displaystyle \frac{1}{2} \|\mathcal{P}_{\Omega}(\mat{UW}_{\mat U}^\top) - \mathcal{P}_{\Omega}(\mat{Y^\star})\|_F^2  +\lambda \ \| \mat{UW}_{\mat U}^\top  - \mathcal{P}_{\Omega}(\mat{UW}_{\mat U}^\top)\|_F^2,
\end{array}
\end{equation}
where $\mat{W}_{\mat U}$ is the unique solution to the inner optimization problem in (\ref{eq:factorized_formulation_mc}) and $\mathcal{U}$ is the column space of $\mat U$ \citep{dai12a}. It should be noted that (\ref{eq:grassmann_formulation_mc}) is a problem on the Grassmann manifold $\Grass{r}{m}$, but computationally handled with matrices $\mat U$ in $\Stiefel{r}{m}$.

Consider the case when $\mat{Y}^\star = [\mat{Y}_1^\star, \mat{Y}_2^\star,\ldots, \mat{Y}_N^\star]$ is partitioned along the columns such that the size of $\mat{Y}_i^\star$ is $m\times n_i$ with $\sum n_i = n$ for $i=\{1,2,\ldots, N\}$. $\Omega_i$ is the {\it local} set of indices for each of the partitions. An equivalent reformulation of (\ref{eq:grassmann_formulation_mc}) is the finite sum problem
\begin{equation}\label{eq:semidecentralized_grassmann_formulation_mc}
\begin{array}{lll}
 \min\limits_{\mathcal{U}\in\Grass{r}{m}}  \ \ \displaystyle \sum\limits_{i=1}^{N} 
  f_i(\mathcal U),
\end{array}
\end{equation}
where $f_i(\mathcal U) \coloneqq  0.5\|\mathcal{P}_{\Omega _i}({{\mat{UW}}}_{i{\mat U}}^\top) - \mathcal{P}_{\Omega _i}(\mat{Y}^\star _i)\|_F^2  + \lambda  \| \mat{UW}_{i\mat U}^\top  - \mathcal{P}_{\Omega}(\mat{UW}_{i\mat U}^\top)\|_F^2$ and ${{\mat{W}}}_{i{\mat U}}$ is the least-squares solution to $\argmin_{\mat{W}_i \in\mathbb{R}^{n_i \times r }}  \|\mathcal{P}_{\Omega _i}(\mat{UW}_{i }^\top) - \mathcal{P}_{\Omega _i}(\mat{Y}_i^\star)\|_F^2 + \lambda \| \mat{UW}_{i }^\top  - \mathcal{P}_{{\Omega}_i}(\mat{UW}_{i }^\top)\|_F^2 $ for each of the data partitions. The problem (\ref{eq:semidecentralized_grassmann_formulation_mc}) is of type (\ref{eq:general_formulation}).

\subsection{Low-dimensional multitask feature learning as subspace learning}\label{sec:multitask}
We next transform an important problem in the multitask learning setting~\citep{Caruana97,baxter97,Evgeniou05} as a subspace learning problem on the Grassmann manifold. The paradigm of multitask learning advocates joint learning of {\it related} learning problems. A common notion of task-relatedness among different tasks (problems) is as follows: {\it tasks share a latent low-dimensional feature representation}~\citep{ando05a,argyriou08a,Zhang08,mjaw11a,Kang11}. We propose to learn this shared feature subspace. We first introduce a few notations related to multitask setting. 

Let $T$ be the number of given tasks, with each task $t$ having $d_t$ training examples. Let $(\mat{X}_t, y_t)$ be the training instances and corresponding labels for task $t=1,\ldots,T$, where $\mat{X}_t\in\mathbb{R}^{d_t\times m}$ and $y_t\in\mathbb{R}^{d_t}$. Argyriou et al. \citep{argyriou08a} proposed the following formulation to learn a shared latent feature subspace: 
\begin{equation}\label{eq:formulation_mtl}
\begin{array}{llll}
	 \min\limits_{\mat{O}\in \mathbb{R}^{m\times m}, {w}_t\in \mathbb{R}^{m}} &\displaystyle \frac{1}{2} \sum_{t}\|\mat{X}_t \mat{O}w_t  -  {y}_t\|_{F}^2 
 + \  \lambda \|\mat{W}^\top\|^2_{2,1}.
\end{array}
\end{equation}
Here, $\lambda$ is the regularization parameter, $\mat{O}$ is an orthogonal matrix of size $m \times m$ that is shared among $T$ tasks, $w_t$ is the weight vector (also know as task parameter) for task $t$, and $\mat{W}:=[w_1, w_2, \ldots, w_T]^\top$.
The term $\|\mat{W^\top}\|_{2,1} \coloneqq \sum\limits_j (\sum\limits_i \mat{W}_{ij}^2)^{1/2}$ is the $(2,1)$ norm over the matrix $\mat{W^\top}$. It enforces the {\it group sparse} structure~\citep{Yuan06} across the columns of $\mat{W}$. The sparsity across columns in $\mat{W}$ ensures that we learn a low-dimensional latent feature representation for the tasks. The basis vectors of this low-dimensional latent subspace are the columns of $\mat{O}$ corresponding to non-zeros columns of $\mat{W}$. Hence, solving (\ref{eq:formulation_mtl}) leads to a full rank $m\times m$ latent feature space $\mat{O}$ and performs feature selection (via sparse regularization) in this latent space. This is computationally expensive especially in large-scale applications desiring a low ($r$) dimensional latent feature representation where $r\ll m$. In addition, the sparsity inducing $1$-norm is non-smooth which poses additional optimization challenges. 


We instead learn only the basis vectors of the low-dimensional latent subspace, by restricting the dimension of the subspace~\citep{ando05a,Lapin14}. The proposed $r$-dimensional multitask feature learning problem is
\begin{equation}\label{eq:factorized_formulation_mtl}
\begin{array}{llll}
	 \min\limits_{\mat{U}\in \Stiefel{r}{m}} \displaystyle \sum_{t} \min\limits_{w_t\in\mathbb{R}^{r }}& \displaystyle \frac{1}{2}  \|\mat{X}_t \mat{U}w_t  -  y_t\|_{F}^2   + \   \lambda \| w_t \|_2^2,
\end{array}
\end{equation}
where $\mat{U}$ is an $m\times r$ matrix in $\Stiefel{r}{m}$ representing the low-dimensional latent subspace. 
Similar to the earlier matrix completion case, the {inner} least-squares optimization problem in (\ref{eq:factorized_formulation_mtl}) is solved in closed form by exploiting the least-squares structure. It is readily verified that the outer problem (\ref{eq:factorized_formulation_mtl}) is on~$\mathcal U$, i.e., the search space is the Grassmann manifold. To this end, the problem is 
\begin{equation}\label{eq:grassmann_formulation_mtl_full}
\begin{array}{llll}
	 \min\limits_{\mathcal{U}\in \Grass{r}{m}} \displaystyle \sum_{t} \displaystyle \frac{1}{2}  \|\mat{X}_t \mat{U}w_{t \mat{U}}  -  y_t\|_F^2,
\end{array}
\end{equation}
where $w_{t{\mat U}}$ is the least-squares solution to $\argmin_{w_t \in\mathbb{R}^{r }}  \|\mat{XU}w_t - y_t \|_F^2 + \lambda \| w_t\|_2^2$. More generally, we distribute the $T$ tasks in (\ref{eq:grassmann_formulation_mtl_full}) into $N$ groups such that $\sum n_i = T$. This leads to the formulation
\begin{equation}\label{eq:grassmann_formulation_mtl}
\begin{array}{llll}
	 \min\limits_{\mathcal{U}\in \Grass{r}{m}} \displaystyle \sum_{i=1}^{N} \left \{ f_i(\mathcal{U}) \coloneqq {\sum_{t \in \mathcal{T}_i}   \displaystyle \frac{1}{2}  \|\mat{X}_t \mat{U}w_{t \mat{U}}  -  y_t\|_F^2} \right \},
\end{array}
\end{equation}
where $\mathcal{T}_i$ is the set of the tasks in group $i$. The problem (\ref{eq:grassmann_formulation_mtl}) is also a particular case of (\ref{eq:general_formulation}).

\section{Decentralized subspace learning with gossip}\label{sec:decentralized_problem_setup}
We exploit the finite sum (sum of $N$ sub cost functions) structure of the problem (\ref{eq:general_formulation}) by distributing the tasks among $N$ agents, which perform certain computations, e.g., computation of the functions $f_i$ given $\mathcal U$, independently. Although the computational workload gets distributed among the agents, all agents require the knowledge of the common $\mathcal U$, which is an obstacle in decentralized learning. To circumvent this issue, instead of one shared subspace $\mathcal U$ for all agents, each agent $i$ stores a local subspace copy $\mathcal{U}_i$, which it then updates based on information from its \emph{neighbors}. For minimizing the communication overhead between agents, we additionally put the constraint that at any time slot only two agents communicate, i.e, each agent has exactly only one neighbor. This is the basis of the standard gossip framework \citep{boyd06a}. A similar architecture is also exploited in \citep{bonnabel13a} for decentralized covariance matrix estimation. It should be noted that although we focus on this agent network, our cost formulation can be extended to any arbitrary network of agents.

Following \citep{bonnabel13a}, the agents are numbered according to their proximity, e.g., for $ i \leqslant N -1$, agents $i$ and $i + 1$ are neighbors. Equivalently, agents $1$ and $2$ are neighbors and can communicate. Similarly, agents $2$ and $3$ communicate, and so on. This communication between the agents allows to reach a \emph{consensus} on the subspaces $\mathcal{U}_i$. Our proposed approach to handle the finite sum problem (\ref{eq:general_formulation}) in a decentralized setting is to solve the problem
\begin{equation}\label{eq:decentralized_grassmann_formulation}
\begin{array}{lll}
 \min\limits_{\mathcal{U}_1 , \ldots, \mathcal{U}_N \in\Grass{r}{m}}   \displaystyle \sum\limits_{i=1}^{N} \underbrace{f_i(\mathcal{U}_i)}_{{\rm task\ handled\ by\ agent\ } i}  + \displaystyle\frac{\rho}{2} \underbrace{( d_1^2 (\mathcal{U}_1, \mathcal{U}_2)+ \ldots+d_{N-1} ^2 (\mathcal{U}_{N-1}, \mathcal{U}_{N}) )}_{\rm consensus\ among\ agents},
\end{array}
\end{equation}
where $d_i$ in (\ref{eq:decentralized_grassmann_formulation}) is specifically chosen as the {\it Riemannian distance} between the subspaces $\mathcal{U}_i$ and $\mathcal{U}_{i + 1}$ for $i\leqslant N-1$ and $\rho \geqslant 0$ is a parameter that trades off individual (per agent) task minimization with consensus. 
 

For a large $\rho$, the consensus term in (\ref{eq:decentralized_grassmann_formulation}) dominates, minimizing which allows the agents to arrive at consensus, i.e., their subspaces converge. For $\rho = 0$, the optimization problem (\ref{eq:decentralized_grassmann_formulation}) solves $N$ independent tasks and there is no consensus among the agents. For a sufficiently large $\rho$, the problem (\ref{eq:decentralized_grassmann_formulation}) achieves the goal of approximate task solving along with approximate consensus. It should be noted that the consensus term in (\ref{eq:decentralized_grassmann_formulation}) has only $N-1$ pairwise distances. For example, \citep{bonnabel13a} uses this consensus term structure for covariance matrix estimation. It allows to parallelize subspace learning, as discussed later in Section~\ref{sec:parallel_variant}. \changeBMM{Additionally, the standard gossip formulation allows to show the benefit of the trade-off weight $\rho$ in practical problems.} 

It should be noted that although we focus on a particular agent-agent network, our cost formulation can be extended to any arbitrary network of agents. For other complex (and communication heavy) agent-agent networks, the consensus part of (\ref{eq:decentralized_grassmann_formulation}) has additional terms.


\section{The Riemannian gossip algorithm for (\ref{eq:decentralized_grassmann_formulation})}\label{sec:gossip}
In this section, we focus on proposing a stochastic algorithm for (\ref{eq:decentralized_grassmann_formulation}) by appropriately sampling the terms in the cost function of (\ref{eq:decentralized_grassmann_formulation}). This leads to simpler updates of the agent specific subspaces. Additionally, it allows to exploit parallelization of updates. To this end, we exploit the stochastic gradient algorithm framework on Riemannian manifolds \citep{bonnabel13a,sato17a,zhang16a}.

As a first step, we reformulate the problem (\ref{eq:decentralized_grassmann_formulation}) as a single sum problem, i.e.,
\begin{equation}\label{eq:sampling_decentralized_grassmann_formulation}
\begin{array}{lll}
 \min\limits_{\mathcal{U}_1 , \ldots, \mathcal{U}_N \in\Grass{r}{m}}  & \displaystyle \sum\limits_{i = 1}^{N-1} g_i(\mathcal{U}_i, \mathcal{U}_{i+1}) ,
\end{array}
\end{equation}
where $g_i(\mathcal{U}_i, \mathcal{U}_{i+1}) := \alpha_i f_k(\mathcal{U}_i) + \alpha_{i+1} f_{i+1}(\mathcal{U}_{i+1}) + 0.5\rho d_k^2 (\mathcal{U}_{i}, \mathcal{U}_{i+1})$. Here, $\alpha_i$ is a scalar that ensures that the cost functions of (\ref{eq:sampling_decentralized_grassmann_formulation}) and (\ref{eq:decentralized_grassmann_formulation}) remain the same with the reformulation, i.e., $\sum g_i = f_1 +\ldots+ f_N + 0.5\rho ( d_1^2 (\mathcal{U}_1, \mathcal{U}_2) +  d_2 ^2(\mathcal{U}_2, \mathcal{U}_3) +\ldots+ d_{N-1} ^2 (\mathcal{U}_{N-1}, \mathcal{U}_{N}) )$. Equivalently, $\alpha_i = 1$ if $i=\{1, N \}$, else $\alpha_i = 0.5$.

\begin{algorithm}[t]
    \caption{Proposed stochastic gossip algorithm for (\ref{eq:sampling_decentralized_grassmann_formulation}).}
    \label{alg:online_algorithm}
    
    \begin{enumerate}
\item At each time slot $k$, pick $g_i $ with $i \leqslant N-1$ randomly with uniform probability. This is equivalent to picking up the agents $i$ and $i +1$.

\item Compute the Riemannian gradients $\grad_{\mathcal{U}_i} g_i$ and $\grad_{{\mathcal U}_{i+1}} g_{i}$.
\item Given a stepsize $\gamma_k$ (e.g., $\gamma_k \coloneqq a/(1 + b k)$; $a$ and $b$ are constants), update $\mathcal{U}_i$ and $\mathcal{U} _{i + 1}$ as 
\[
\begin{array}{lllll}
({\mathcal{U}_i})_+  = \Exp_{\mathcal{U}_i} (- \gamma_k \grad_{\mathcal{U}_i} g_i   ) \\
({\mathcal{U}_{i + 1}})_+  =  \Exp_{\mathcal{U}_{i+1}}( - \gamma_k  \grad_{\mathcal{U}_{i+1}} g_{i}), \\
\end{array}
\]
where $ ({\mathcal{U}_i})_+$ and $({\mathcal{U}_{i + 1}})_+$ are the updated subspaces and $\Exp_{\mathcal{U}_i}(\xi_{\mathcal{U}_i})$ is the \emph{exponential} mapping that maps the tangent vector $\xi_{\mathcal{U}_i} \in T_{\mathcal{U}_i} \Grass{r}{m}$ onto $\Grass{r}{m}$.
\item Repeat.
\end{enumerate}
\end{algorithm}

At each iteration of the stochastic gradient algorithm, we sample a sub cost function $g_i$ from the cost function in (\ref{eq:sampling_decentralized_grassmann_formulation}) uniformly at random (we stick to this sampling process for simplicity). Based on the chosen sub cost function, the subspaces $\mathcal{U}_i$ and $\mathcal{U}_{i+1}$ are updated by following the negative Riemannian gradient (of the sub cost function $g_i$) with a stepsize. The stepsize sequence over the iterations satisfies the conditions that it is square integrable and its summation is divergent (this is explicitly mentioned in the proof of Proposition \ref{prop:convergence} later). 

The overall algorithm is listed as Algorithm \ref{alg:online_algorithm}, which converges to a critical point of (\ref{eq:sampling_decentralized_grassmann_formulation}) \emph{almost surely} \citep{bonnabel13a}. An outcome of the updates from Algorithm \ref{alg:online_algorithm} is that agents $1$ and $N$ update twice the number of times the rest of agents update.

The matrix characterizations of implementing Algorithm \ref{alg:online_algorithm} are shown in Table \ref{tab:matrix_expressions}. The development of some of the expressions are discussed earlier in Section \ref{sec:grassmann}. The asymptotic convergence analysis of Algorithm \ref{alg:online_algorithm} follows directly from the proposition below. 

\begin{table}[t]
\caption{Matrix characterizations of ingredients needed to implement Algorithm \ref{alg:online_algorithm}.}\label{tab:matrix_expressions} 
\begin{center}  
\begin{tabular}{p{2cm} p{9.5cm}}
Ingredients & Matrix formulas\\
\toprule
$d_i^2(\mathcal{U}_i, \mathcal{U}_{i+1})$ &$ 0.5\|\Log_{\mathcal{U}_i}(\mathcal{U}_{i + 1})\|_{F}^2$.\vspace{3pt}\\
$\Log_{\mathcal{U}} (\widetilde{\mathcal{U}})$ & $\mat{P} {\rm arctan}(\mat{S})\mat{Q}^\top$, where $\mat{PS Q}^\top$ is the rank-$r$ singular value decomposition of $( \widetilde{\mat{U}} -   \mat{U}  (\mat{U}^\top \widetilde{\mat{U}}))  (\mat{U}^\top \widetilde{\mat{U}})^{-1} $. Here, $\mat{U}$ and $\widetilde{\mat{U}}$ are the matrix representations of $\mathcal{U}$ and $\widetilde{\mathcal{U}}$. \vspace{3pt}\\
$g_i(\mathcal{U}_i, \mathcal{U}_{i+1}) $ & $\alpha_i f_i + 0.5\rho d_i^2(\mathcal{U}_i, \mathcal{U}_{i +1})$ \vspace{3pt}.\\
$\grad_{\mathcal{U}_i} g_i$ &

$\alpha_i \grad_{\mathcal{U}_i} f_i + \rho \grad_{\mathcal{U}_i}d_i$\vspace{3pt}.\\
$\grad_{\mathcal{U}_i} f_i $ &  $\Grad_{\mathcal{U}_i}f_i - \mat{U}_i (\mat{U}_i ^\top \Grad_{\mathcal{U}_i}f_i)$, where for the matrix completion cost (\ref{eq:semidecentralized_grassmann_formulation_mc}), $\Grad_{\mathcal{U}_i} f_i $ is
$\Grad_{\mathcal{U}_i} f_i = (\mathcal{P}_{\Omega _i}({{\mat{U}_i\mat{W}}}_{i{\mat{U}_i}}^\top) - \mathcal{P}_{\Omega _i}(\mat{Y}^\star _i) )\mat{W}_{i{\mat{U}_i}}$, and for the multitask feature learning cost (\ref{eq:grassmann_formulation_mtl}) $\Grad_{\mathcal{U}_i} f_i $ is $\Grad_{\mathcal{U}_i} f_i = \sum_{t \in \mathcal{T}_i} \mat{X}_t^\top(\mat{X}_t \mat{U}w_{t{\mat{U}_i}} -  y_t)w^\top_{t{\mat{U}_i}}$. Here, $\mat{W}_{i{\mat{U}_i}}$ and $w_{t\mat{U}_i}$ are the solutions of the inner least-squares problems for the respective problems. \vspace{3pt}
 \\
$\grad_{\mathcal{U}_i} d_i$ & $-\Log_{\mathcal{U}_i}(\mathcal{U}_{i+1})$ \citep{bonnabel13a}. \vspace{3pt}  \\
$\Exp_{\mathcal{U}_i}(\xi_{\mathcal{U}_i}) $ & $ \mat{U}_{i} \mat{V} \cos(\mat \Sigma)\mat{V}^\top +   \mat{W} \sin(\mat \Sigma) \mat{V}^\top$, where $\mat{W \Sigma V}^\top$ is the rank-$r$ singular value decomposition of $\xi_{\mathcal{U}_i}$. The $\cos(\cdot)$ and $\sin(\cdot)$ operations are on the diagonal entries. \\
\bottomrule
\end{tabular}
\end{center} 
\end{table}


\begin{proposition}\label{prop:convergence}
Algorithm~\ref{alg:online_algorithm} converges to a first-order critical point of (\ref{eq:sampling_decentralized_grassmann_formulation}).
\end{proposition}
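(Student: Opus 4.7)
The plan is to recognize Algorithm~\ref{alg:online_algorithm} as a Riemannian stochastic gradient descent on the product Grassmann manifold $\mathcal{M} := \Grass{r}{m} \times \cdots \times \Grass{r}{m}$ ($N$ factors), applied to the finite-sum cost $F(\mathcal{U}_1,\ldots,\mathcal{U}_N) := \sum_{i=1}^{N-1} g_i(\mathcal{U}_i, \mathcal{U}_{i+1})$, and then invoke the convergence theorem of \citet{bonnabel13a}. First, I would observe that at each iteration, sampling $i$ uniformly over $\{1,\ldots,N-1\}$ and updating only the $i$-th and $(i+1)$-th blocks along $-\grad_{\mathcal{U}_i} g_i$ and $-\grad_{\mathcal{U}_{i+1}} g_i$ (with zeros elsewhere) corresponds to following an unbiased stochastic estimator of the full Riemannian gradient $\grad F$ on $\mathcal{M}$, up to the known constant factor $1/(N-1)$ that is absorbed into the step size.

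Next, I would verify each hypothesis of Bonnabel's almost-sure convergence theorem for Riemannian SGD. Compactness of $\mathcal{M}$ (since Grassmannians are compact) automatically gives boundedness of the iterates and bounded injectivity radius, which is the ingredient that makes the exponential-map update well-defined and allows one to control the second-order Taylor remainder uniformly. The step-size schedule $\gamma_k = a/(1+bk)$ satisfies the Robbins–Monro conditions $\sum_k \gamma_k = \infty$ and $\sum_k \gamma_k^2 < \infty$. Smoothness of $F$ on $\mathcal{M}$ reduces to smoothness of the task terms $f_i$ (immediate from the closed-form inner least-squares solutions in Sections~\ref{sec:matrix_completion}--\ref{sec:multitask}) and of the squared Grassmann distance terms $d_i^2(\mathcal{U}_i,\mathcal{U}_{i+1})$, whose Riemannian gradient $-\Log_{\mathcal{U}_i}(\mathcal{U}_{i+1})$ is recorded in Table~\ref{tab:matrix_expressions}. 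Compactness together with smoothness then gives uniform bounds on both the Riemannian gradient and the second covariant derivative, so the variance of the stochastic gradient is uniformly bounded.

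With these hypotheses in place, \citet[Theorem~1]{bonnabel13a} implies that $F(\mathcal{U}_1^{(k)},\ldots,\mathcal{U}_N^{(k)})$ converges almost surely and $\grad F \to 0$ almost surely along the iterates, which is exactly the claim that the algorithm converges to a first-order critical point of \eqref{eq:sampling_decentralized_grassmann_formulation}.

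The main obstacle is that the squared Grassmann distance $d_i^2$ fails to be smooth on the cut locus of $\mathcal{U}_i$, where $\Log_{\mathcal{U}_i}$ becomes multi-valued (it corresponds to principal angles reaching $\pi/2$). I would handle this by noting that the cut locus is a measure-zero, lower-dimensional subset of $\mathcal{M}$ and that, for generic initializations and the small step sizes dictated by the Robbins–Monro schedule, the iterates remain in a totally normal neighborhood where $d_i^2$ is smooth; this is the same qualification implicitly used in \citep{bonnabel13a} for the analogous covariance-consensus setting. Outside this caveat the argument is a direct reduction to the existing Riemannian SGD convergence theory.
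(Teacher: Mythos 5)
Your proposal is correct and takes essentially the same route as the paper's own proof: it recasts Algorithm~\ref{alg:online_algorithm} as Riemannian stochastic gradient descent on the product manifold $\mathrm{Gr}^N(r,m)$ and invokes Bonnabel's almost-sure convergence result under compactness of the product Grassmannian, the Robbins--Monro step-size conditions, and unbiasedness of the sampled gradient. Your closing caveat about the non-smoothness of $d_i^2$ on the cut locus addresses a point the paper's proof silently passes over, so if anything your treatment is slightly more careful than the published argument.
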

\begin{proof}
The problem (\ref{eq:sampling_decentralized_grassmann_formulation}) can be modeled as
\begin{equation}\label{eq:proof}
\begin{array}{lll}
\min\limits_{\mathcal{V} \in \mathcal{M}} & \displaystyle \frac{1}{N-1} \sum\limits_{i=1}^{N-1}h_i(\mathcal{V}), 
\end{array}
\end{equation}
where $\mathcal{V}:=(\mathcal{U}_1, \mathcal{U}_2,\ldots, \mathcal{U}_N)$, $\mathcal{M}$ is the Cartesian product of $N$ Grassmann manifolds $\Grass{r}{m}$, i.e., $\mathcal{M}:= {\rm Gr}^N(r,m)$, and $h_i: \mathcal{M} \rightarrow \mathbb{R}: \mathcal{V} \mapsto h_i(\mathcal{V}) = g_i(\mathcal{U}_i, \mathcal{U}_{i+1})$. The updates shown in Algorithm~{\ref{alg:online_algorithm}} precisely correspond to stochastic gradients updates for the problem (\ref{eq:proof}).

It should be noted that $\mathcal{M}$ is {\it compact} and has a Riemannian structure, and consequently, the problem (\ref{eq:proof}) is an empirical risk minimization problem on a compact manifold. The key idea of the proof is that for a compact Riemannian manifold, all continuous functions of the parameter are bounded, e.g., the Riemannian Hessian of $h(\mathcal{V})$ is upper bounded for all $\mathcal{V} \in \mathcal{M}$. We assume that 1) the stepsize sequence satisfies the condition that $\sum \gamma_k = \infty$ and $ \sum (\gamma_k)^2  < \infty$ and 2) at each time slot $k$, the stochastic gradient estimate $\grad_{\mathcal V} h_i$ is an {\it unbiased} estimator of the batch Riemannian gradient $\sum_i \grad_{\mathcal{V}} h_i$. Under those assumptions, Algorithm~\ref{alg:online_algorithm} converges to a first-order critical point of (\ref{eq:sampling_decentralized_grassmann_formulation}). 
\end{proof}

\subsection{Computational complexity}
\label{sec:computational_cost_analysis}
For an update of $\mathcal{U}_i$ with the formulas shown in Table \ref{tab:matrix_expressions}, the computational complexity depends on the computation of partial derivatives of the cost functions in (\ref{eq:formulation_mc}) and $(\ref{eq:formulation_mtl})$, e.g., the gradient $\Grad_{\mathcal{U}_i} f_i$ computation of agent $i$. 
\begin{itemize}
\item \textbf{Task-related computations.} In the matrix completion problem (\ref{eq:formulation_mc}), the computational cost of the partial derivatives for agent $i$ is $O(|\Omega_i|r^2 + n_i r ^3 + m r^2)$. In the multitask feature learning problem (\ref{eq:formulation_mtl}), the computational cost is $O(m|\mathcal{T}_i| r^2 + |\mathcal{T}_i| r ^3 + m r^2 + (\sum_{t = \mathcal{T}_i} d_t)m)$, where $\mathcal{T}_i$ is the group of tasks assigned to agent $i$.  

\item \textbf{Consensus-related computations.} Communication between agents $i$ and $i+1$ involves computing $d_i(\mathcal{U}_i,\mathcal{U}_{i+1})$ which costs $O(mr^2 + r^3)$.

\item \textbf{Manifold-related computations.} Computing the exponential and logarithm mappings cost $O(mr^2 + r^3)$. Computation of the Riemannian gradient costs $O(mr^2)$.

\end{itemize}


\subsection{Preconditioned variant}\label{sec:preconditioned_variant}
The performance of first order algorithms (including stochastic gradients) often depends on the \emph{condition number} of the Hessian of the cost function (at the minimum). For the matrix completion problem (\ref{eq:formulation_mc}), the issue of {\it ill-conditioning} arises when data $\mat{Y}^\star$ have power law distributed singular values. Additionally, a large value of $\rho$ in (\ref{eq:decentralized_grassmann_formulation}) leads to convergence issues for numerical algorithms. The recent works \citep{ngo12a,mishra14c,boumal15a} exploit the concept of \emph{manifold preconditioning} for the matrix completion problem (\ref{eq:formulation_mc}). In particular, the Riemannian gradients are \emph{scaled} by computationally cheap matrix terms that arise from the second order curvature information of the cost function. This operation on a manifold requires special attention. In particular, the matrix scaling \emph{must} be a positive definite operator on the {\it tangent space} of the manifold \citep{mishra14c,boumal15a}. 

Given the Riemannian gradient, e.g,  $\grad_{\mathcal{U}_i} g_i$ for agent $i$, the proposed preconditioner for (\ref{eq:decentralized_grassmann_formulation}) is
\begin{equation}\label{eq:preconditioner}
\grad_{\mathcal{U}_i} g_i \mapsto (\grad_{\mathcal{U}_i} g_i)  (\underbrace{\mat{W}_{i{\mat U}_i} ^\top \mat{W}_{i{\mat U}_i}}_{\text{from the task term}} \ \ + \ \ \underbrace{\rho \mat{I}}_{\text{from the consensus term}})^{-1},
\end{equation}
where $\mat{I}$ is the $r\times r$ identity matrix. The use of preconditioning (\ref{eq:preconditioner}) costs $O(n_i r^2 + r^3)$, which is computationally cheap to implement. The term $\mat{W}_{i{\mat U}_i} ^\top \mat{W}_{i{\mat U}_i}$ captures a \emph{block diagonal approximation} of the Hessian of the simplified (but related) cost function $ \|{{\mat{U}_i\mat{W}}}_{i{\mat{U}_i}}^\top - \mat{Y}^\star _i \|_F^2$, i.e., an approximation for (\ref{eq:formulation_mc}) and (\ref{eq:formulation_mtl}) \citep{ngo12a,mishra14c,boumal15a}. The term $\rho \mat{I}$ is an approximation of the second order derivative of the square of the Riemannian distance. Finally, it should be noted that ${\mat{W}_{i{\mat U}_i} ^\top \mat{W}_{i{\mat U}_i}}  + {\rho \mat{I}} \succ 0$.


%

\subsection{Parallel variant}\label{sec:parallel_variant}
The particular structure (also known as the red-black ordering structure in domain decomposition methods) of the cost terms in (\ref{eq:sampling_decentralized_grassmann_formulation}), allows for a straightforward parallel update strategy for solving (\ref{eq:sampling_decentralized_grassmann_formulation}). We look at the following separation of the costs, i.e., the problem is 
\begin{equation}\label{eq:parallel_sampling_decentralized_grassmann_formulation}
\begin{array}{lll}
 \min\limits_{\mathcal{U}_1 , \ldots, \mathcal{U}_N \in\Grass{r}{m}}  \ \  \displaystyle  {\underbrace{g_1 + g_3  + \ldots}_{g_{\rm odd}}}
 \ \  + \ \ {\underbrace{g_2 + g_4  + \ldots}_{g_{\rm even}}},
\end{array}
\end{equation}
where the subspace updates corresponding to $g_{\rm odd}$ (and similarly $g_{\rm even}$) are {\it parallelizable}. 

We apply Algorithm~\ref{alg:online_algorithm}, where we pick the sub cost function $g_{\rm odd}$ (or $g_{\rm even}$) with uniform probability. The key idea is that sampling is on $g_{\rm odd}$ and $g_{\rm even}$ and not on the sub cost functions $g_i$ directly. This strategy allows to perform $\lfloor (N-1)/2 \rfloor$ updates in parallel. 

\section{Numerical comparisons}\label{sec:comparisons}
Our proposed algorithm (Stochastic Gossip) presented as Algorithm~\ref{alg:online_algorithm} and its preconditioned (Precon Stochastic Gossip) and parallel (Parallel Gossip and Precon Parallel Gossip) variants are compared on various different benchmarks on matrix completion and multitask problems. In many cases, our decentralized gossip algorithms match the generalization performance of competing (tuned) batch algorithms.

Stochastic algorithms with $N$ agents are run for a maximum of $200(N-1)$ iterations. The parallel variants are run for $400N$ iterations. Overall, because of the agent-agent network structure, agents $1$ and $N$ end up performing a maximum of $200$ updates and rest all other agents perform $400$ updates. The stepsize sequence is defined as $\gamma_k = a/(1 + bk)$, where $k$ is the time slot. The constants $a$ and $b$ are set using $5$-fold cross validation on the training data. 

Our implementations are based on the Manopt toolbox \citep{boumal14a}. 
All simulations are performed in Matlab on a $2.7$ GHz Intel Core i$5$ machine with $8$ GB of RAM. The comparisons on the  Netflix and MovieLens-10M datasets are performed on a cluster with larger memory.


\begin{figure}[t]
\centering
\begin{minipage}[b]{0.48\hsize}
\centering
\includegraphics[width=\hsize]{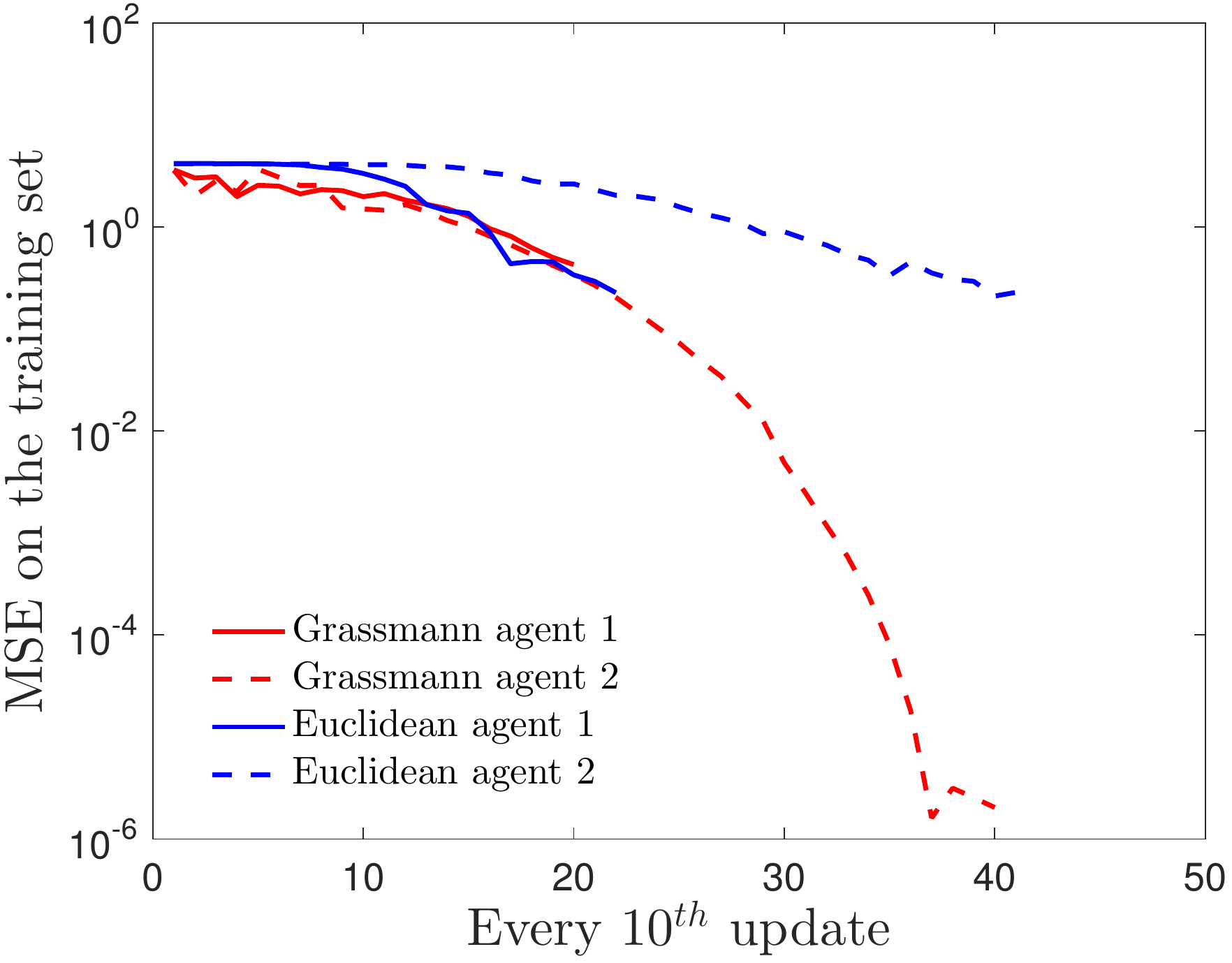}\\
\justify{(a) Performance of the agents on solving the local matrix completion tasks. }
\end{minipage}
\hfill
\begin{minipage}[b]{0.48\hsize}
\centering
\includegraphics[width=\hsize]{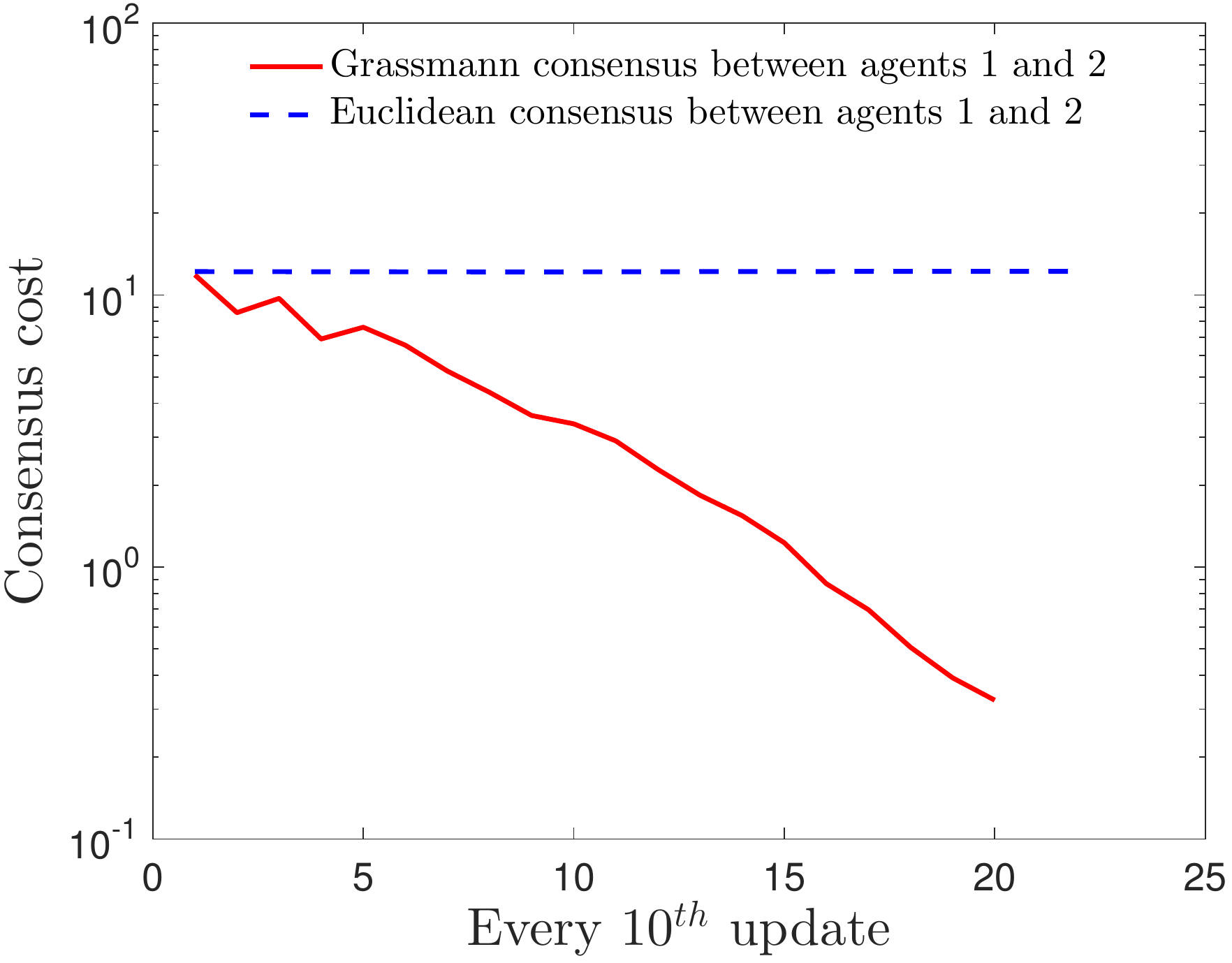}\\
\justify{(b) The Grassmann geometry allows to agents to achieve consensus.}
\end{minipage}
\caption{Exploiting the Grassmann geometry leads to better optimization. The weight factor $\rho$ is best tuned for both the algorithms. This experiment is on a matrix completion problem instance. Figures best viewed in color. }
\label{fig:all_benefit}
\end{figure}

\subsection{Benefit of the Grassmann geometry against the Euclidean geometry} 

In contrast to the proposed formulation (\ref{eq:decentralized_grassmann_formulation}), an alternative is to consider the formulation
\begin{equation}\label{eq:decentralized_euclidean_formulation}
\begin{array}{lll}
 \min\limits_{\mat{U}_1 , \ldots, \mat{U}_N \in \mathbb{R}^{m\times r}}\   \displaystyle \sum\limits_{i} {f_i(\mat{U}_i)} + \displaystyle\frac{\rho}{2} {( \| \mat{U}_1 - \mat{U}_2\|_F^2   +\ldots+ \| \mat{U}_{N-1} - \mat{U}_N\|_F^2)},
\end{array}
\end{equation} 
where the problem is in the {\it Euclidean} space and the consensus among the agents is with respect to the Euclidean distance. Although this alternative choice is appealing for its numerical simplicity, the benefit of exploiting the geometry of the problem is shown in Figure \ref{fig:all_benefit}. We consider a matrix completion problem instance in Figure \ref{fig:all_benefit}, where we apply Stochastic Gossip algorithms with $N=6$ agents. Figure \ref{fig:all_benefit} shows the performance of only two agents for clarity, where agent $1$ performs $200$ updates and agent $2$ performs $400$ updates. This because of the agent-agent network structure as discussed in Section \ref{sec:gossip}.  As shown in Figure \ref{fig:all_benefit}, the algorithm with the Euclidean formulation (\ref{eq:decentralized_euclidean_formulation}) performs poorly due to a very slow rate of convergence. Our approach, on the other hand, exploits the geometry of the problem and obtains a lower mean squared error (MSE).


\subsection{Matrix completion comparisons} \label{sec:numerical_comparisons_benefit_of_Grassmann}

For each synthetic example considered here, two matrices $\mat{A} \in \mathbb{R}^{m \times r}$ and $\mat{B} \in \mathbb{R}^{n \times r}$ are generated according to a Gaussian distribution with zero mean and unit standard deviation. The matrix product $\mat{AB} ^\top$ gives a random matrix of rank $r$ \citep{cai10a}. A fraction of the entries are randomly removed with uniform probability. Noise (sampled from the Gaussian distribution with mean zero and standard deviation $10^{-6} $) is added to each entry to construct the training set $\Omega$ and $\mat{Y}^\star$. The over-sampling ratio (OS) is the ratio of the number of known entries to the matrix dimension, i.e, ${\rm OS} = |\Omega|/(mr +nr -r^2)$. We also create a test set by randomly picking a small set of entries from $\mat{AB} ^\top$. The matrices $\mat{Y}^\star _i$ are created by distributing the number of $n$ columns of $\mat{Y}^\star$ {\it equally} among the agents. The train and test sets are also partitioned similarly among $N$ agents. All the algorithms are initialized randomly and the regularization parameter $\lambda$ in (\ref{eq:grassmann_formulation_mc}) is set to $\lambda = 0$ for all the below considered cases (except in Case $5$ below, where $\lambda = 0.01$).

\textbf{Case 1: effect of $\rho$.} Here, we consider a problem instance of size $10\, 000 \times 100\, 000$ of rank $5$ and OS $6$. Two scenarios with $\rho = 10^3$ and $\rho = 10^{10}$ are considered. Figures \ref{fig:all}(a)\&(b) show the performance of Stochastic Gossip. Not surprisingly, for $\rho = 10^{10}$, we only see consensus (the distance between agents $1$ and $2$ tends to zero). For $\rho = 10^3$, we observe both a low MSE on the matrix completion problem as well as  consensus among the agents.

\textbf{Case 2: performance of Stochastic Gossip versus Parallel Gossip.} We consider Case 1 with $\rho = 10^3$. Figures \ref{fig:all}(c)\&(d) show the performance of Stochastic Gossip and Parallel Gossip, both of which show a similar behavior on the training set (as well as on the test set, which is not shown here for brevity).

\begin{figure}[!t]
\centering
\begin{minipage}[b]{0.48\hsize}
\centering
\includegraphics[width=\hsize]{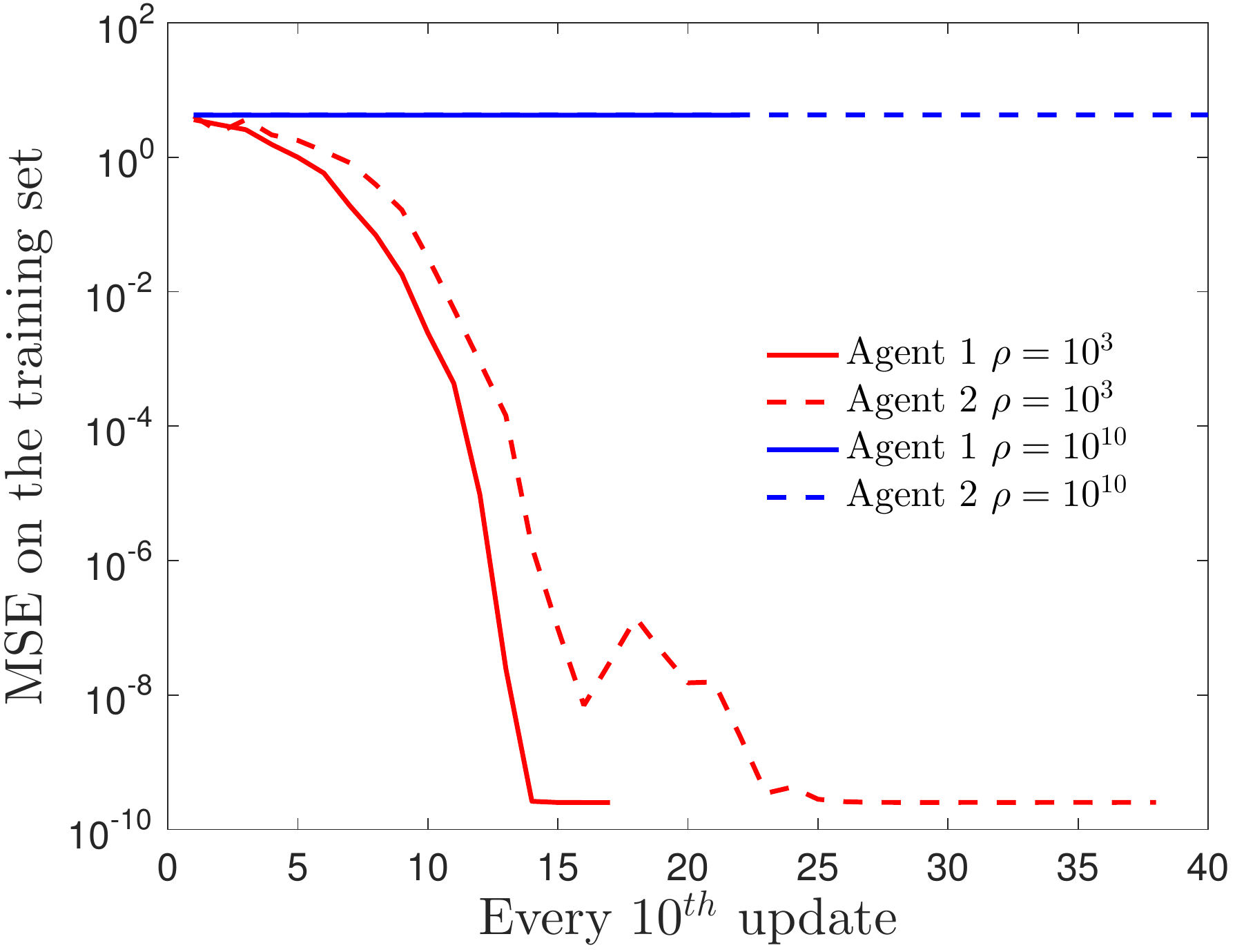}
\justify{(a) Agents successfully learn subspace with a tuned $\rho$ parameter (Case 1).}
\end{minipage}
\hfill
\begin{minipage}[b]{0.48\hsize}
\centering
\includegraphics[width=\hsize]{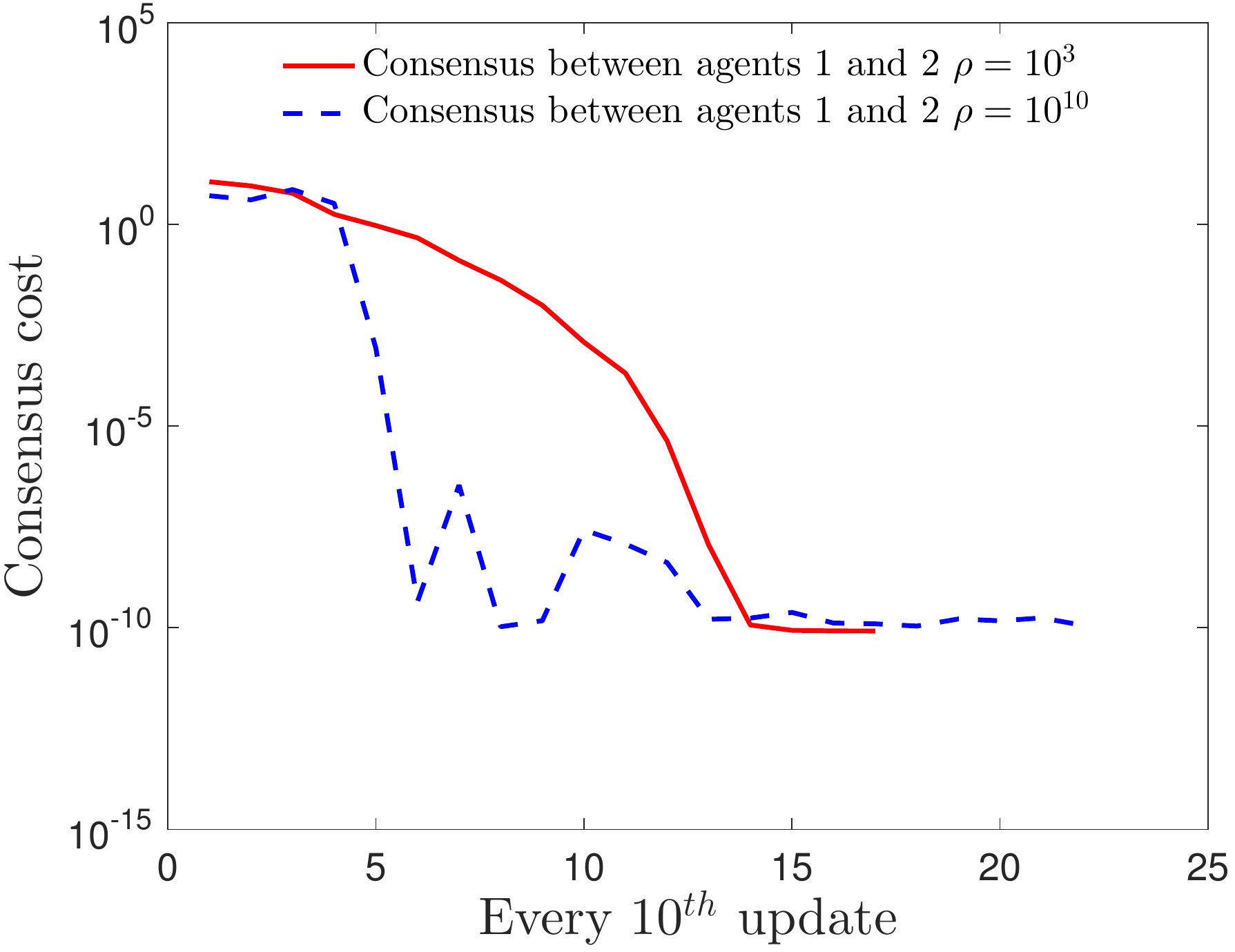}
\justify{(b) Consensus between the agents is obtained with a tuned $\rho$ parameter (Case 1).}
\end{minipage} \\
\begin{minipage}[b]{0.48\hsize}
\centering
\includegraphics[width=\hsize]{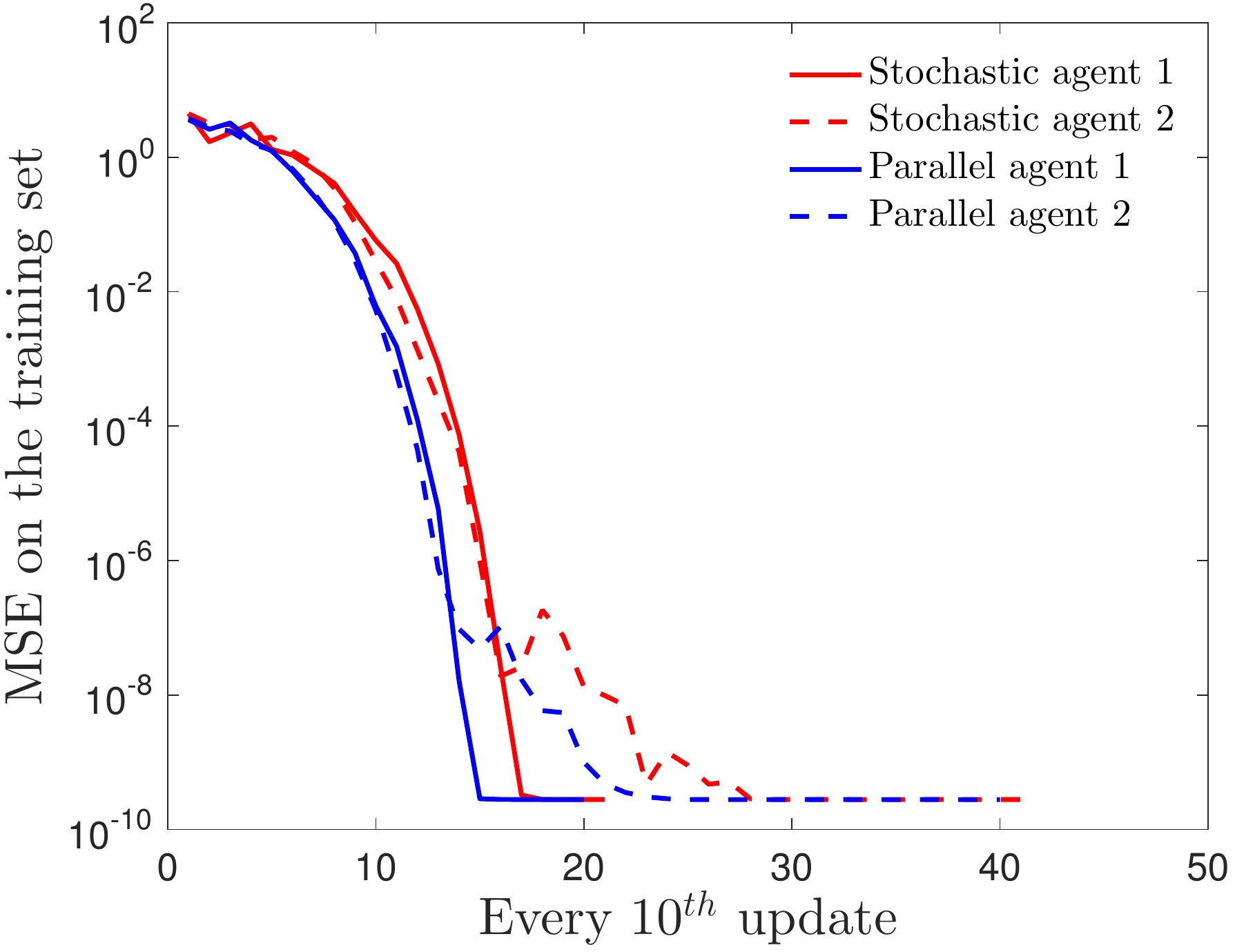}
\justify{(c) Stochastic and parallel variants perform similarly on local subspace learning tasks (Case 2).}
 \end{minipage}
 \hfill
 \begin{minipage}[b]{0.48\hsize}
\centering
\includegraphics[width=\hsize]{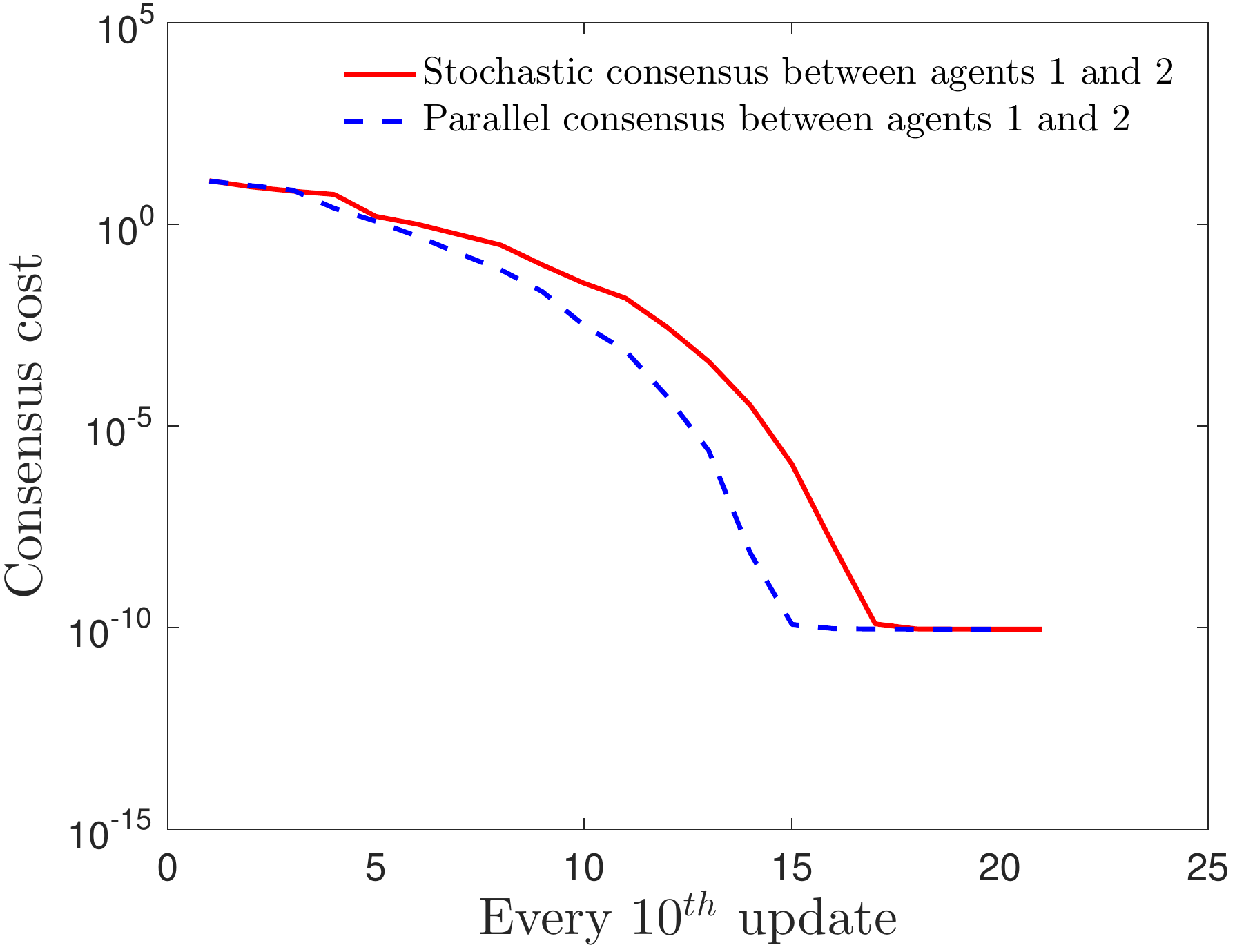}
\justify{(d) Both stochastic and parallel variants achieve consensus between the agents (Case 2).}
 \end{minipage}\\
\begin{minipage}[b]{0.48\hsize}
\centering
\includegraphics[width=\hsize]{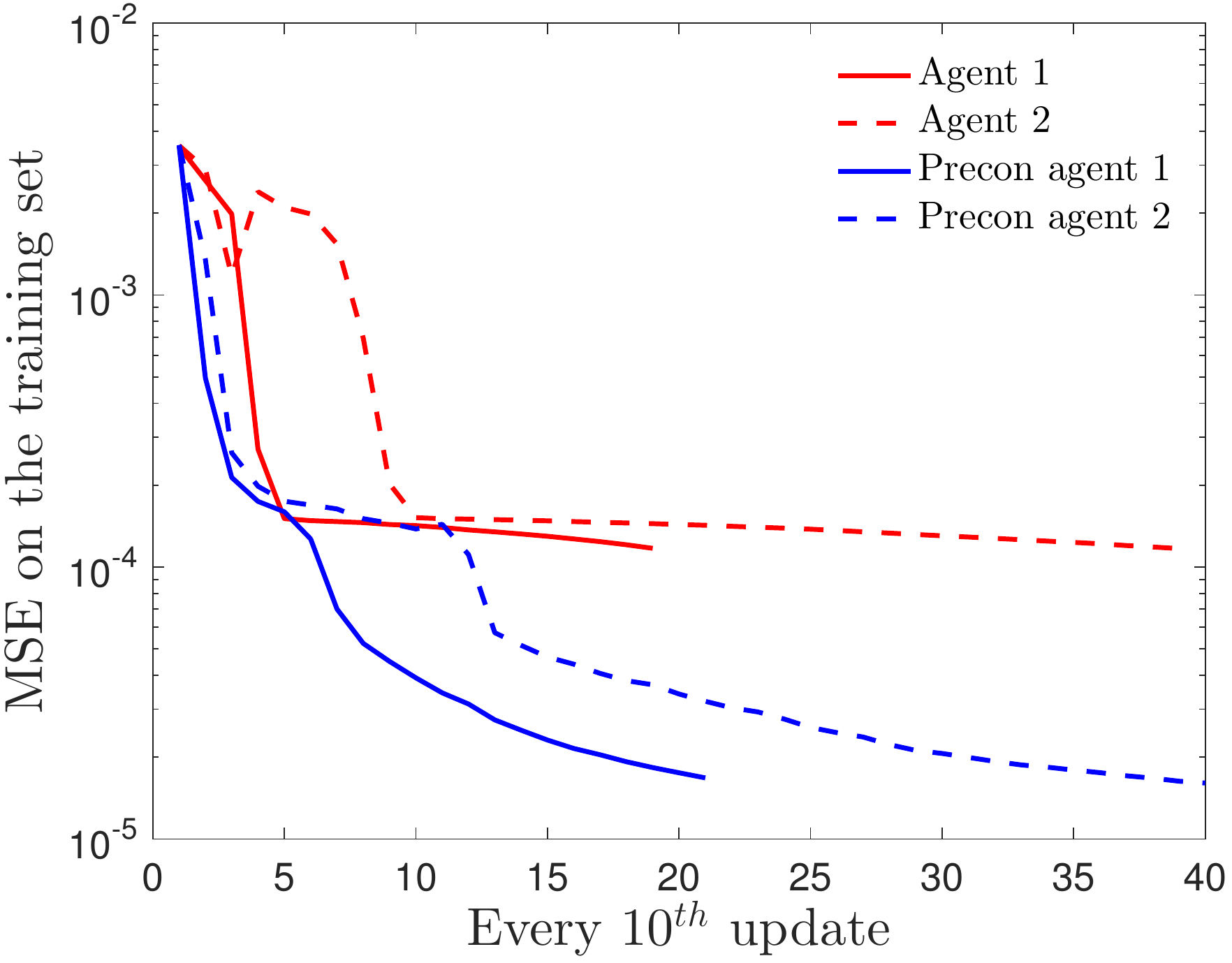}
\justify{(e) Precon Gossip performs better than Stochastic Gossip on ill-conditioned data (Case 3).}
\end{minipage}
\hfill
 \begin{minipage}[b]{0.48\hsize}
\centering
\includegraphics[width=\hsize]{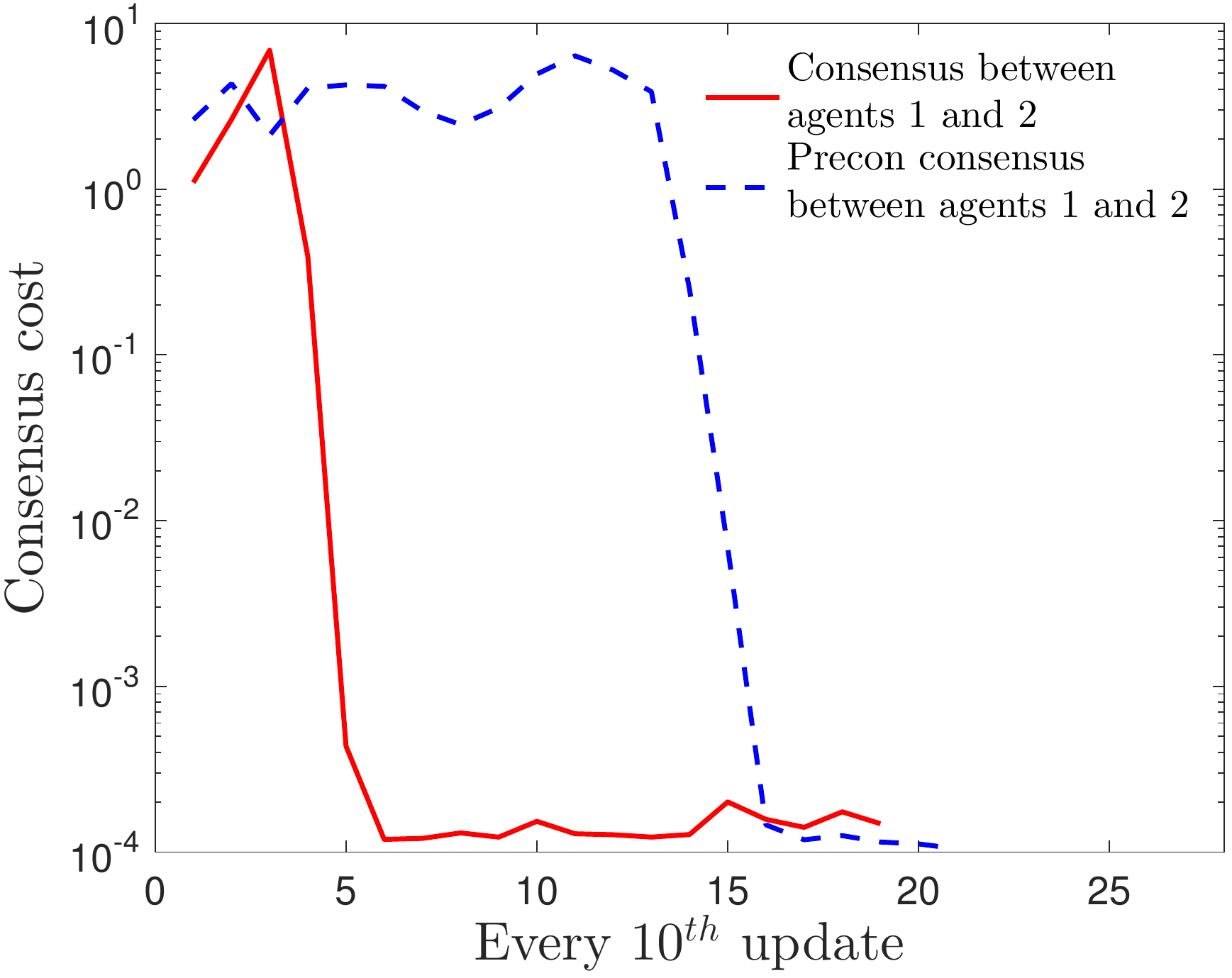}
\justify{(f) Both Precon Gossip and Stochastic Gossip allow agents to reach consensus (Case 3).\newline}
\end{minipage}
\caption{Performance  of the proposed algorithms on low-rank matrix completion problems. Figures (a)\&(b)  correspond to the experimental setup described in Case 1, (c)\&(d) correspond to Case 2, and (e)\&(f)  correspond to Case 3. Figures best viewed in color. }
\label{fig:all}
\end{figure}

\textbf{Case 3: ill-conditioned instances.} We consider a problem instance of size $5\,000 \times 50\,000$ of rank $5$ and impose an exponential decay of singular values with condition number $500$ and OS $6$. Figures \ref{fig:all}(e)\&(f) show the performance of Stochastic Gossip and its preconditioned variant for $\rho = 10^3$. During the initial updates, the preconditioned variant aggressively minimizes the completion term of (\ref{eq:decentralized_grassmann_formulation}), which shows the effect of the preconditioner (\ref{eq:preconditioner}). Eventually, consensus among the agents is achieved.

\begin{figure}
\centering
\begin{minipage}[b]{0.48\hsize}
\centering
\includegraphics[width=\hsize]{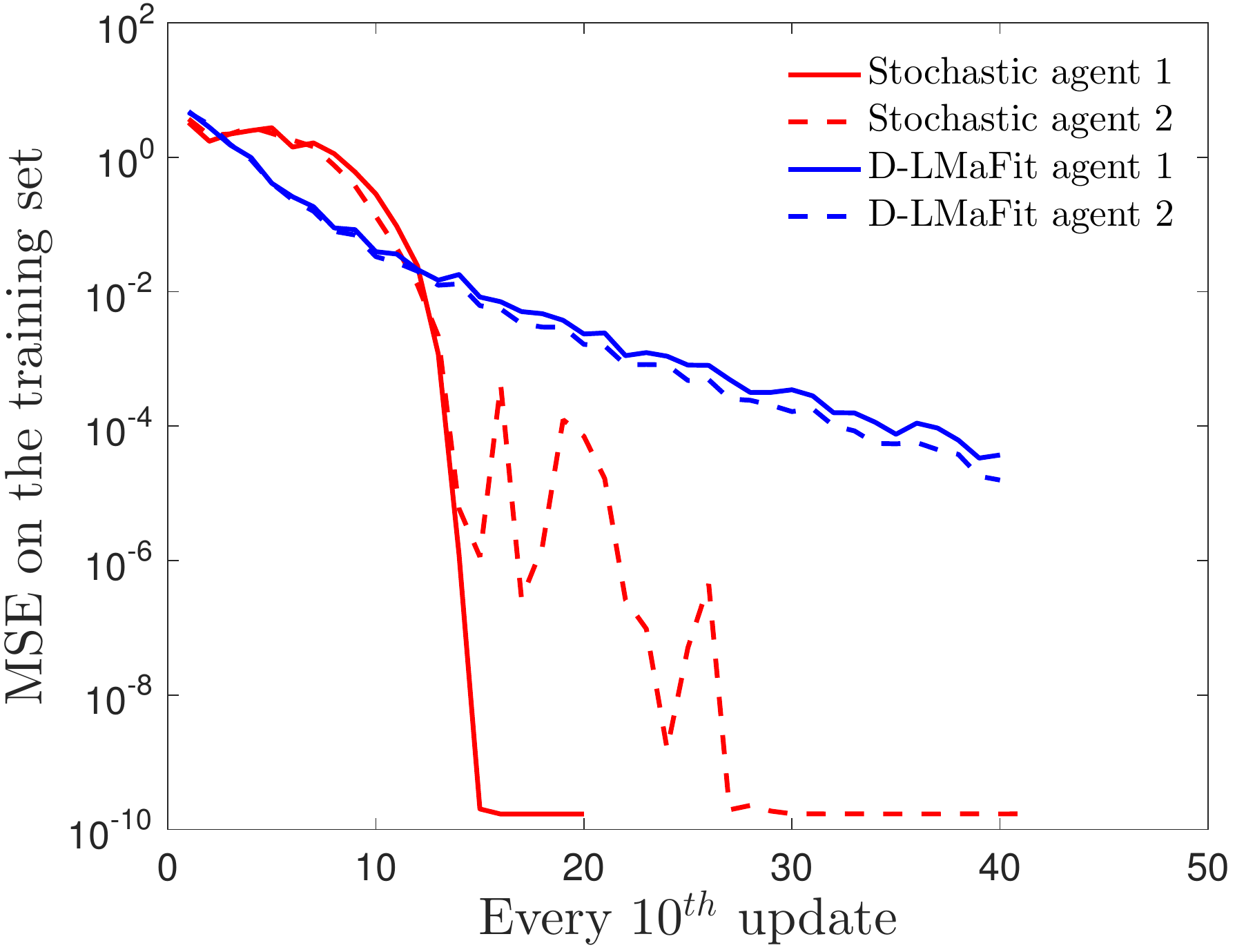}
\justify{(a) Stochastic Gossip outperforms D-LMaFit on solving the local matrix completion tasks.}
\end{minipage}
\hfill
\begin{minipage}[b]{0.48\hsize}
\centering
\includegraphics[width=\hsize]{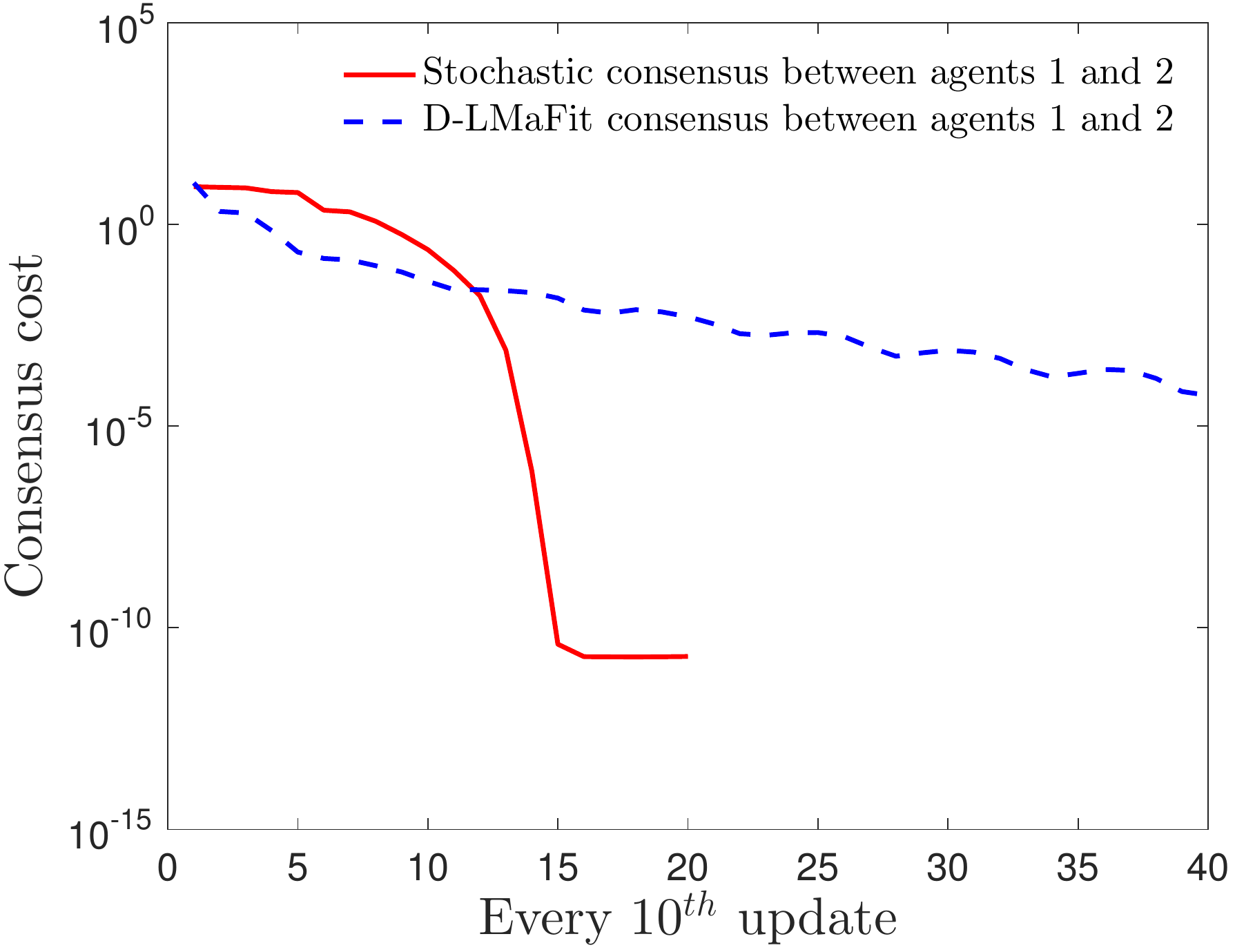}
\justify{(b) Stochastic Gossip achieve faster consensus between the agents than D-LMaFit.\newline }
\end{minipage}
\caption{Comparisons with D-LMaFit (Case 4). Figures best viewed in color. }
\label{fig:all_dlmafit}
\end{figure}

\textbf{Case 4: Comparisons with state-of-the-art.}
We show comparisons with D-LMaFit \citep{ling12a,lin15a}, \changeBMM{the only publicly available decentralized algorithm to the best of our knowledge}. It builds upon the batch matrix completion algorithm in \citep{wen12a} and is adapted to decentralized updates of the low-rank factors. It requires an inexact dynamic consensus step at every iteration by performing an Euclidean average of low-rank factors (of all the agents). In contrast, our algorithms enforce {\it soft averaging} of only two agents at every iteration with the consensus term in (\ref{eq:decentralized_grassmann_formulation}). We employ a smaller problem instance in this experiment since the D-LMaFit code (supplied by its authors) does {\it not} scale to large-scale instances. D-LMaFit is run for $400$ iterations, i.e., each agent performs $400$ updates. We consider a problem instance of size $500\times 12\,000$, rank $5$, and OS $6$. D-LMaFit is run with the default parameters. For Stochastic Gossip, we set $\rho = 10^3$. As shown in Figure \ref{fig:all_dlmafit}, Stochastic Gossip quickly outperforms D-LMaFit. Overall, Stochastic Gossip takes fewer number of updates of the agents to reach a high accuracy.

\begin{table}[t]
    \begin{minipage}[t]{1\linewidth}
      \caption{Mean test RMSE on the Netflix dataset  with different number of agents ($N$) and rank $10$ (Case 5). Our decentralized approach, Stochastic Gossip, is comparable to the state-of-the-art batch algorithm, RTRMC.}\label{tab:netflix} 
      \vspace{8pt}
      \scriptsize 
      \centering
        \begin{tabular}{p{0.9cm} p{0.9cm} p{0.9cm} p{0.9cm} p{0.9cm}  p{2cm}} 
\toprule
 \multicolumn{5}{ c }{Stochastic Gossip}  &  \multicolumn{1}{ c }{Batch method} \\
\midrule
 $N$=2 & $N$=5 & $N$=10 & $N$=15 & $N$=20 &  \multicolumn{1}{ c }{RTRMC} \\
 $0.877 $ & $ 0.885$ &$ 0.891$ & $0.894$ &  $0.900$ & \multicolumn{1}{ c }{0.873}\\
\bottomrule
\end{tabular}
    \end{minipage}%
    \hspace{0.2cm}
    \begin{minipage}[t]{1\linewidth}
      \centering
        \caption{Mean test RMSE on MovieLens 10M dataset with different number of agents ($N$) and across ranks (Case 5). Our decentralized approach, Stochastic Gossip, is comparable to the state-of-the-art batch algorithm, RTRMC.}\label{tab:movielens} 
        \vspace{8pt}
        \scriptsize 
        \begin{tabular}{ p{3cm} p{1cm} p{1cm} p{1cm} p{1cm}}
\toprule
& Rank $3$ & Rank $5$ & Rank $7$ & Rank $9$\\
\midrule
\noteBM{$N = 10$} &  $0.844$ &$0.836$ &$0.845 $ & $0.860$ \\
\noteBM{$N=5$} & $0.834$ &  $0.821$  & $0.829$  & $ 0.841$ \\
\noteBM{RTRMC} (batch) & $0.829$ & $0.814$  &$0.812$  & $0.814$  \\
 \bottomrule
\end{tabular}
    \end{minipage} 
\end{table}

\textbf{Case 5: comparisons on the Netflix and the MovieLens-10M data-sets.}  
The Netflix dataset (obtained from the code of \citep{recht13a}) consists of $100\,480\,507$ ratings by $480\,189$ users for $17\,770$ movies. We perform $10$ random $80/20$-train/test partitions. The training ratings are centered around $0$, i.e., the mean rating is subtracted. We split both the train and test data among the agents along the number of users. We run Stochastic Gossip with $\rho = 10^7$ (set with cross validation) and for $400(N-1)$ iterations and $N=\{2,5,10,15,20 \}$ agents. We show the results for rank $10$ (the choice is motivated in \citep{boumal15a}). Additionally for Stochastic Gossip, we set the regularization parameter to $\lambda =0.01$. For comparisons, we show the best test root mean square error (RMSE) score obtained by RTRMC \citep{boumal11a,boumal15a}, which is a batch method for solving the matrix completion problem on the Grassmann manifold. RTRMC employs a preconditioned trust-region algorithm. 
\noteBM{In order to compute the test RMSE for Stochastic Gossip on the full test set (not on the agent-partitioned test sets), we use the (\emph{Fr\'echet}) mean subspace of the subspaces obtained by the agents as the final subspace obtained by our algorithm.} Table \ref{tab:netflix} shows the RMSE scores for Stochastic Gossip and RTRMC averaged over ten runs. Table \ref{tab:netflix} shows that the proposed gossip approach allows to reach a reasonably good solution on the Netflix data with different number of agents (which interact minimally among themselves). \noteBM{It should be noted that as the number of agents increases, the consensus problem (\ref{eq:decentralized_grassmann_formulation}) becomes challenging. Similarly, consensus of agents at higher ranks is more challenging as we need to learn a larger subspace.} Figure \ref{fig:all_netflix} shows the consensus of agents for the case $N=10$. 

We also show the results on the MovieLens-10M dataset of $10\, 000\, 054$ ratings by $71\,567$ users for $10\,677$ movies \citep{movielens}. The setup is similar to the earlier Netflix case. We run Stochastic gossip with $N=\{5,10 \}$ and $\rho = 10^5$. We show the RMSE scores for different ranks in Table \ref{tab:movielens}.


\begin{figure}
\centering
\begin{minipage}[b]{0.5\hsize}
\centering
\includegraphics[width=\hsize]{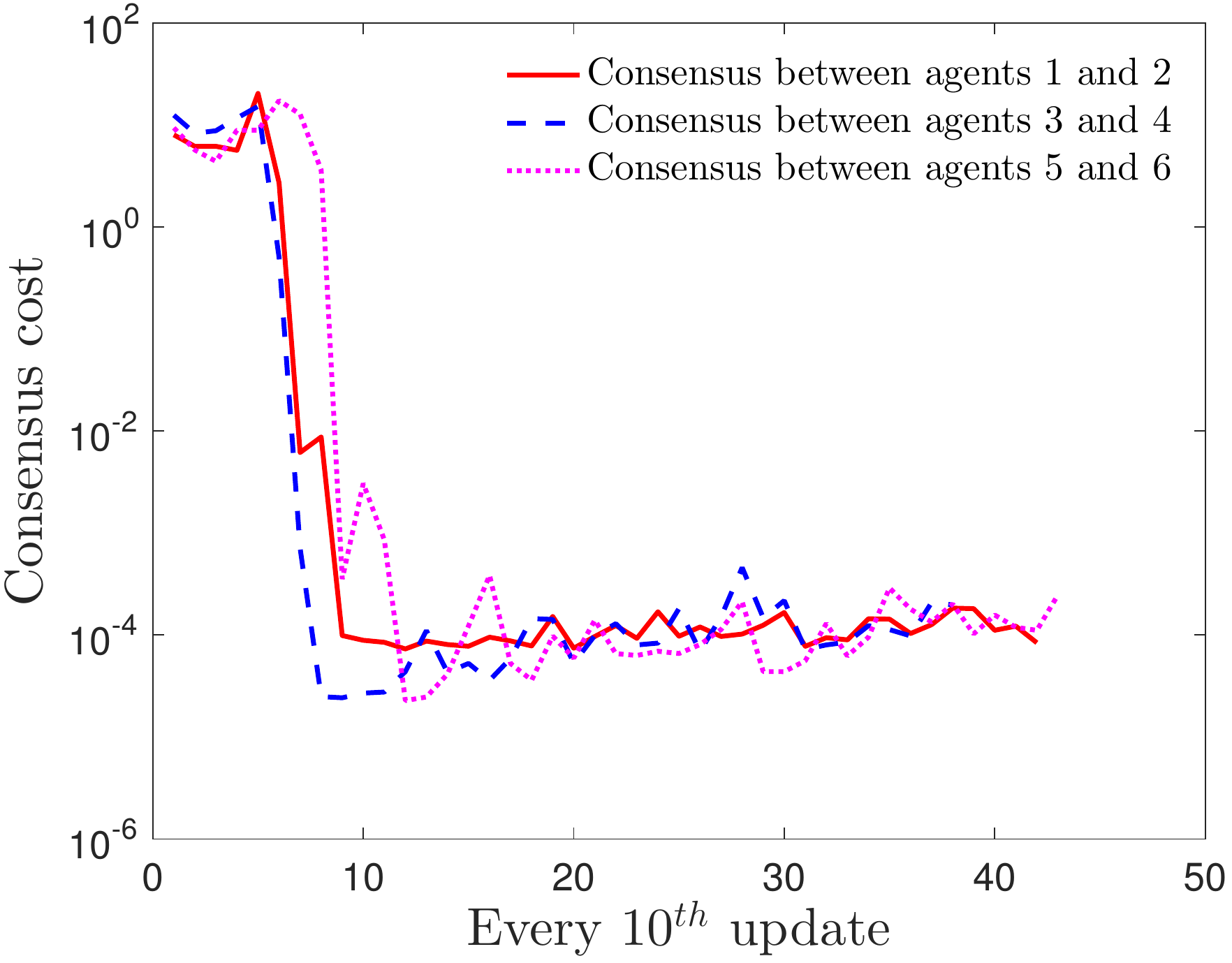}
\end{minipage}
\caption{Matrix completion experiment on the Netflix dataset (Case 5). Our decentralized approach, Stochastic Gossip, achieves consensus between the agents. Figure best viewed in color. }
\label{fig:all_netflix}
\end{figure}

\subsection{Multitask comparisons}
In this section, we discuss the numerical results on the low-dimensional multitask feature learning problem (\ref{eq:grassmann_formulation_mtl}) on different benchmarks. The regularization parameter $\lambda$ that is used to solve for $w_t$ in (\ref{eq:grassmann_formulation_mtl}) is set to $\lambda = 0$ for Case 6 and is set to $\lambda = 0.1$ for Case 7.

\textbf{Case 6: synthetic datasets.} We consider a toy problem instance with $T=1000$ tasks. The number of training instance in each task $t$ is between $10$ and $50$ ($d_t$ chosen randomly). The input space dimension is $m =100$. The training instances $\mat{X}_t$ are generated according to the Gaussian distribution with zero mean and unit standard deviation. A $5$-dimensional feature subspace $\mat{U}_*$ for the problem instance is generated as a random point on $\mat{U}_* \in \Stiefel{5}{100}$. The weight vector $w_t$ for the task $t$ is generated from the Gaussian distribution with zero mean and unit standard deviation. The labels for training instances for task $t$ are computed as $y_t = \mat{X}_t \mat{U}_*\mat{U}_*{^\top}w_t$. The labels $y_t$ are subsequently perturbed with a random mean zero Gaussian noise with $10^{-6}$ standard deviation. The tasks are uniformly divided among $N=6$ agents and Stochastic Gossip is initialized with $r=5$ and $\rho = 10^3$. 

Figure \ref{fig:all_multitask}(a) shows that all the agents are able to converge to the optimal subspace~$\mat{U}_*$.


\begin{figure}
\centering
\begin{minipage}[b]{0.48\hsize}
\centering
\includegraphics[width=\hsize]{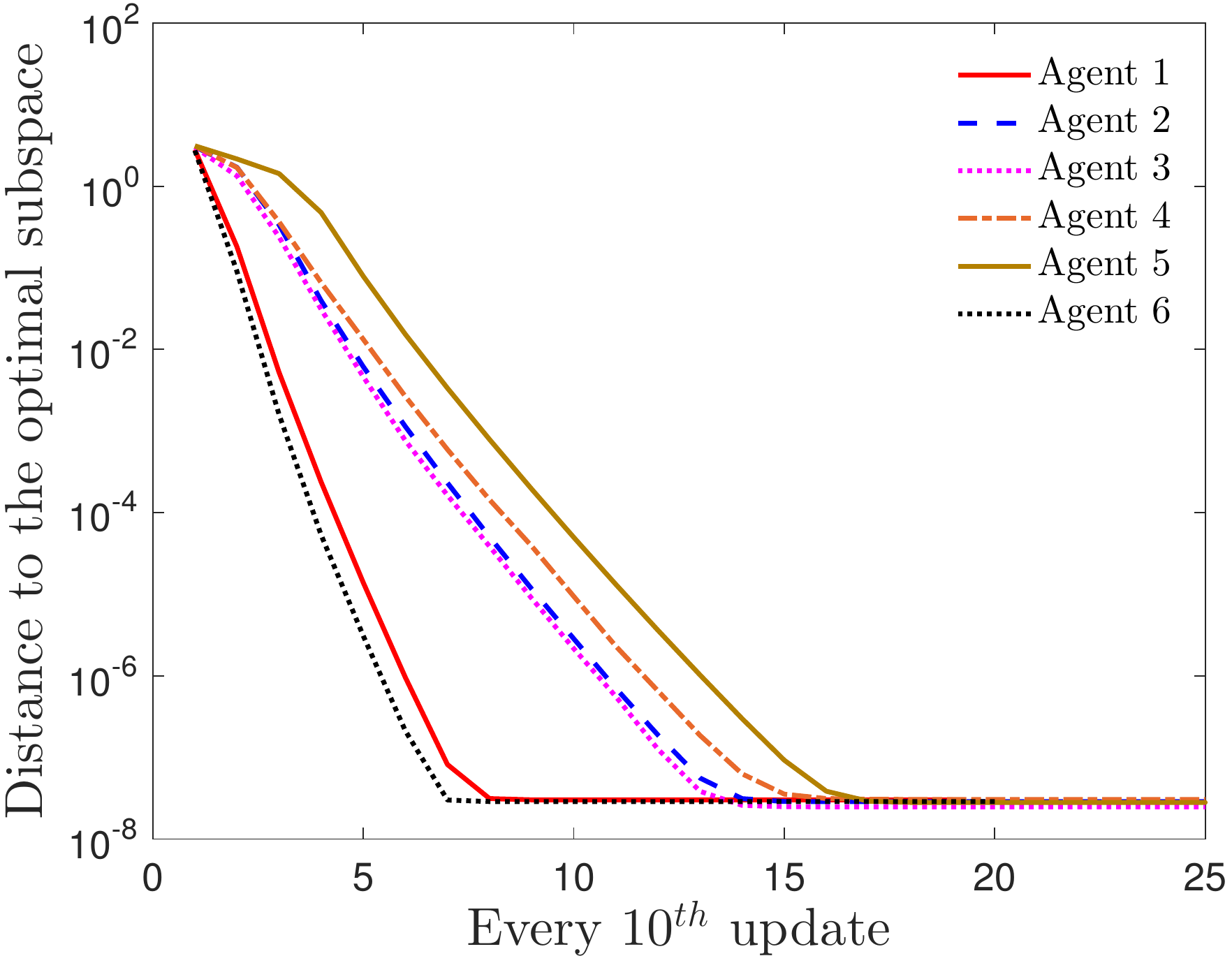}
\justify{(a) Agents converge to the optimal  $5$-dimensional subspace $\mat{U}_*$ on a synthetic dataset (Case 6).}
\end{minipage}
\hfill
\begin{minipage}[b]{0.48\hsize}
\centering
\includegraphics[width=\hsize]{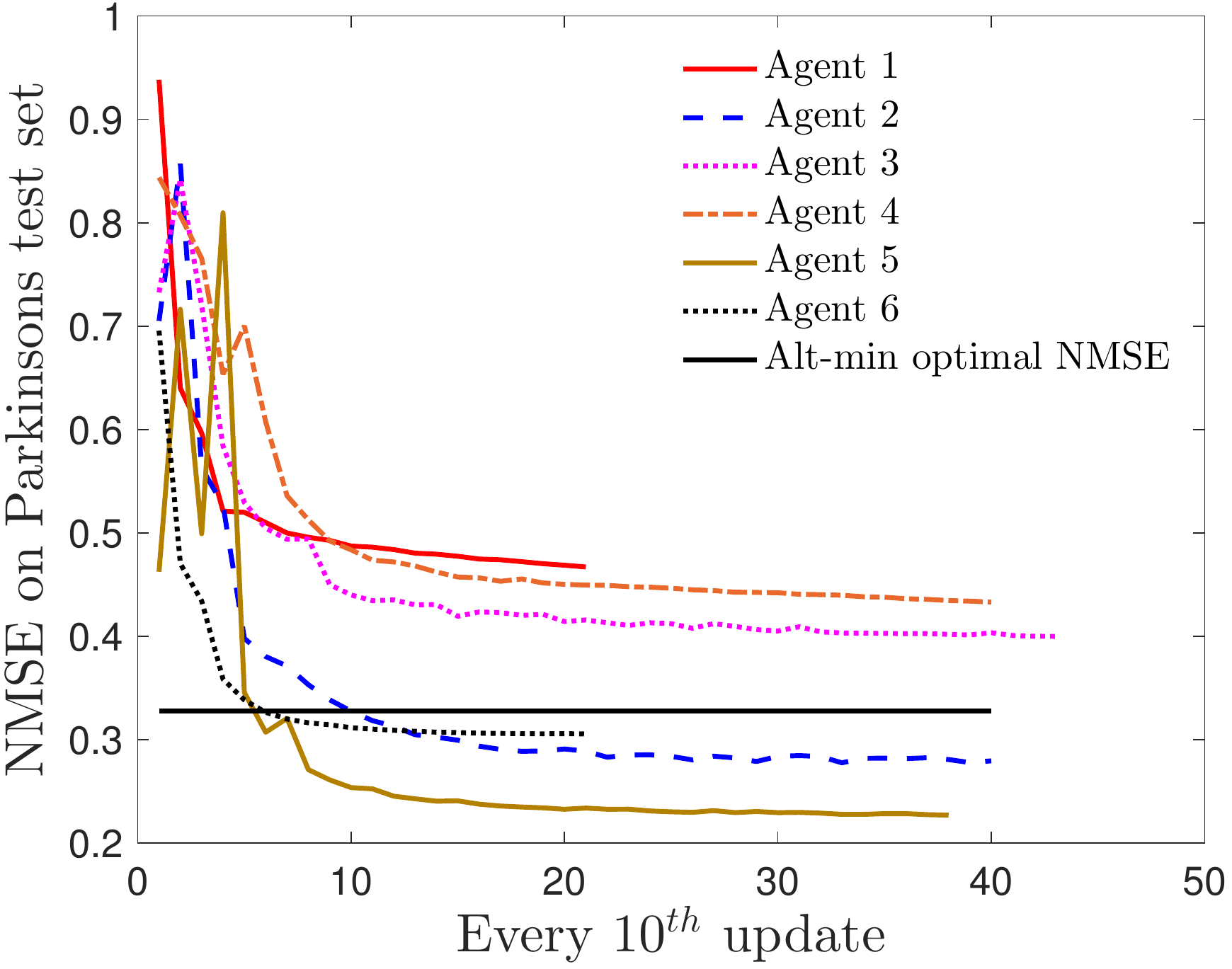}
\justify{(b) Agents learn a $3$-dim. subspace with comparable NMSE to Alt-Min \citep{argyriou08a} (Case 7).}
\end{minipage}
\caption{Comparisons on multitask learning benchmarks. Figures best viewed in color. }
\label{fig:all_multitask}
\end{figure}

\textbf{Case 7: comparisons on multitask benchmarks.} We compare the generalization performance with formulation (\ref{eq:grassmann_formulation_mtl}) solved by the proposed gossip algorithm against state-of-the-art multitask feature learning formulation (\ref{eq:formulation_mtl}) proposed in \citep{argyriou08a}. Argyriou et al. \citep{argyriou08a} propose an alternate minimization batch algorithm (Alt-Min) to solve an equivalent convex problem of (\ref{eq:formulation_mtl}). Conceptually, Alt-Min alternates between the subspace learning step and task weight vector learning step. Alt-Min does optimization over an $m\times m$-dimensional space. In contrast, we learn a low-dimensional $m\times r$ subspace in (\ref{eq:grassmann_formulation_mtl}), where $r\leq m$. As discussed below, the experiments show that our algorithms obtain a competitive performance even for values of $r$ where $r < m$, thereby making the formulation (\ref{eq:grassmann_formulation_mtl}) suitable for low-rank multitask feature learning. 

We compare Stochastic Gossip and Alt-Min on two real-world multitask benchmark datasets: Parkinsons and School. 
In the \emph{Parkinsons} dataset, the goal is to predict the Parkinson's disease symptom score at different times of $42$ patients with $m = 19$ bio-medical features \citep{frank10a,mjaw12,muandet13a}. A total of $5\,875$ observations are available. The symptom score prediction problem for each patient is considered as a task ($T=42$). The \emph{School} dataset consists of $15\,362$ students from $139$ schools \citep{goldstein91a,Evgeniou05,argyriou08a}. The aim is to predict the performance (examination score) of the students from the schools, given the description of the schools and past record of the students. A total of $m =28$ features are given. The examination score prediction problem for each school is considered as a task ($T=139$).

We perform $10$ random $80/20$-train/test partitions. We run Stochastic Gossip with $\rho = 10^6$, $N=6$, and for $200(N-1)$ iterations. Alt-Min is run till the relative change in its objective function (across consecutive iterations)  is below the value $10^{-8}$. Following~\citep{argyriou08a,mjaw11a,chen11a}, we report the performance of multitask algorithms in terms of {\it normalized mean squared error} (NMSE). It is defined as the ratio of the mean squared error (MSE) and the variance of the label vector. 

Table \ref{tab:multitask_datasets} shows the NMSE scores (averaged over all $T$ tasks and ten runs) for both the algorithms. The comparisons on benchmark multitask learning datasets show that we are able to obtain {smaller} NMSE: $0.339$ (Parkinsons, $r$=5) and $0.761$ (School, $r$=3). We also obtain these NMSE at much a smaller rank compared to Alt-Min algorithm. 

Figure \ref{fig:all_multitask}(b) shows the NMSE scores obtained by different agents, where certain agents outperform Alt-Min. Overall, the average performance across the agents matches that of the  batch Alt-Min algorithm. 


\begin{table}[t]
\caption{Mean test NMSE scores obtained on multitask datasets across different ranks $r$ (Case 7). The search space of our decentralized approach (Stochastic Gossip) is $m\times r$ while that of the batch algorithm Alt-Min is $m \times m$. The generalization performance of Stochastic Gossip is comparable to Alt-Min.
} 
\label{tab:multitask_datasets} 
\begin{center} \scriptsize 
\begin{tabular}{ p{3cm} p{0.9cm} p{0.9cm} p{0.9cm} p{0.9cm} p{3cm}}
\toprule
\multirow{2}{*}{Datasets} & \multicolumn{4}{c}{Stochastic Gossip with $N$=$6$}  & \multirow{2}{*}{Alt-Min (batch)} \\
& $r$=$3$ & $r$=$5$ & $r$=$7$ & $r$=$9$ &   \\
\midrule
Parkinsons  ($m$=$19$) &$0.345$  &$0.339$ & $0.342$& $0.341$ & $0.340$ \\
School  ($m$=$28$) & $0.761$ & $ 0.786$ & $0.782$ &$ 0.786$ &$0.781$ \\
\bottomrule
\end{tabular}
\end{center} 
\end{table}

\section{Conclusion}
We have proposed a decentralized Riemannian gossip approach to subspace learning problems. The sub-problems are distributed among a number of agents, which are then required to achieve consensus on the global subspace. Building upon the non-linear gossip framework, we modeled this as minimizing a weighted sum of \emph{task solving} and \emph{consensus} terms on the Grassmann manifold. The consensus term exploits the rich geometry of the Grassmann manifold, which allows to propose a novel stochastic gradient algorithm for the problem with simple updates. Experiments on two interesting applications -- low-rank matrix completion and multitask feature learning -- show the efficacy of the proposed Riemannian gossip approach. Our experiments demonstrate  the benefit of exploiting the geometry of the search space that arise in subspace learning problems.

Currently in our gossip framework setup, the agents are tied with a single learning stepsize sequence, which is akin to working with a single universal clock. As future research direction, we intend to work on decoupling the learning rates used by different agents \citep{colin16a}.

\small


\bibliographystyle{plainnat}
\setcitestyle{authoryear,open={((},close={))}}

\bibliography{mygossipref}

\end{document}